\documentclass{article}

\usepackage{arxiv}
\usepackage[utf8]{inputenc}
\usepackage[T1]{fontenc}
\usepackage{hyperref}
\usepackage{url}
\usepackage{booktabs}
\usepackage{amsfonts}
\usepackage{microtype}
\usepackage{graphicx}
\usepackage[authoryear,longnamesfirst]{natbib}
\usepackage{doi}

\usepackage{multirow}
\usepackage{float}
\usepackage{geometry}
\usepackage[table]{xcolor}
\usepackage{pifont}
\usepackage{amsmath,amssymb,bm,mathtools}
\usepackage{tikz}
\usetikzlibrary{arrows.meta,positioning,fit,calc,shapes.misc,backgrounds}
\usepackage{algorithm}
\usepackage{algpseudocode}

\newtheorem{assumption}{Assumption}
\newtheorem{definition}{Definition}
\newtheorem{lemma}{Lemma}
\newtheorem{proof}{Proof}
\newtheorem{theorem}{Theorem}
\newtheorem{corollary}{Corollary}

\DeclareMathOperator*{\argmin}{arg\,min}

\definecolor{TopOne}{RGB}{198,219,239}
\definecolor{TopTwo}{RGB}{222,235,247}
\definecolor{TopThree}{RGB}{239,243,255}

\newcommand{\sep}{\and}

\algrenewcommand\algorithmicrequire{\textbf{Input:}}
\algrenewcommand\algorithmicensure{\textbf{Output:}}

\newcommand{\printcredits}{}
\newcommand{\bio}[1]{%
\par\medskip
\noindent
\begin{minipage}[t]{0.15\textwidth}
\vspace{0pt}
\fbox{\parbox[c][3.0cm][c]{0.92\linewidth}{\centering\scriptsize Photo}}
\end{minipage}\hfill
\begin{minipage}[t]{0.79\textwidth}\raggedright\sloppy
\vspace{0pt}
}
\def\endbio{%
\end{minipage}\par\medskip
}

\title{ProtoFlow: Mitigating Forgetting in Class-Incremental Remote Sensing Segmentation via Low-Curvature Prototype Flow}

\author{
\normalfont\small
Jiekai Wu\textsuperscript{1}\thanks{Jiekai Wu, Rong Fu and Chuangqi Li contributed equally to this work.},
Rong Fu\textsuperscript{2}\footnotemark[1],
Chuangqi Li\textsuperscript{3}\footnotemark[1],
Zijian Zhang\textsuperscript{4},\\
Guangxin Wu\textsuperscript{5},
Hao Zhang\textsuperscript{5},
Shiyin Lin\textsuperscript{6},
Yang Li\textsuperscript{7,8},\\
Dongxu Zhang\textsuperscript{1},
Chang Su\textsuperscript{1},
Amir H. Gandomi\textsuperscript{9,10},
Simon Fong\textsuperscript{11},
Pengbin Feng\textsuperscript{12}\thanks{Corresponding author: \nolinkurl{pengbinf@alumni.usc.edu}\\\makebox[1.8em][r]{\raisebox{0.1ex}{\small$\blacktriangleright$}\ }Jiekai Wu et al.: Preprint submitted to arXiv. This work has been submitted to the Elsevier for possible publication. Copyright may be transferred without notice, after which this version may no longer be accessible.}\\[0.6em]
\textsuperscript{1}Independent Researcher \\
\textsuperscript{2}The Institute of Collaborative Innovation, University of Macau, Macau 999078, China \\
\textsuperscript{3}Department of Information and Computing Sciences, Faculty of Science, Utrecht University, Utrecht 3584 CC, Netherlands \\
\textsuperscript{4}Department of Computer and Information Science, University of Pennsylvania, Philadelphia, Pennsylvania, USA \\
\textsuperscript{5}School of Computer Science, University of Chinese Academy of Sciences, Beijing 100049, China \\
\textsuperscript{6}Department of Computer \& Information Science \& Engineering, University of Florida, Gainesville, FL 32611, USA \\
\textsuperscript{7}National Engineering Research Center for Beijing Biochip Technology, Beijing 102206, China \\
\textsuperscript{8}CapitalBio Corporation, Beijing 102206, China \\
\textsuperscript{9}Faculty of Engineering \& Information Technology, University of Technology Sydney, Sydney, NSW 2007, Australia \\
\textsuperscript{10}University Research and Innovation Center (EKIK), Obuda University, Budapest 1034, Hungary \\
\textsuperscript{11}Faculty of Science and Technology, University of Macau, Macau 999078, China \\
\textsuperscript{12}Department of Mathematics, University of Southern California, Los Angeles, CA 90007, USA \\[0.3em]
\nolinkurl{ketsu0612@gmail.com}; \nolinkurl{mc46603@um.edu.mo}; \\
\nolinkurl{lichuangqi@qceratech.cn}; \nolinkurl{zzjharry@alumni.upenn.edu}; \\
\nolinkurl{wuguangxin24@mails.ucas.ac.cn}; \nolinkurl{zhanghao233@mails.ucas.ac.cn}; \\
\nolinkurl{shiyin.aslin@gmail.com}; \nolinkurl{yangli01@capitalbio.com}; \\
\nolinkurl{zdx001221@gmail.com}; \nolinkurl{carversu.school@gmail.com}; \\
\nolinkurl{gandomi@uts.edu.au}; \nolinkurl{ccfong@um.edu.mo}; \nolinkurl{pengbinf@alumni.usc.edu}
}

\renewcommand{\shorttitle}{ProtoFlow for Class-Incremental Remote Sensing Segmentation}
\renewcommand{\headeright}{}
\renewcommand{\undertitle}{}

\begin{document}
\maketitle
\clearpage

\begin{abstract}
Remote sensing segmentation in real deployment is inherently continual: new semantic categories emerge, and acquisition conditions shift across seasons, cities, and sensors.
Despite recent progress, many incremental approaches still treat training steps as isolated updates, which leaves representation drift and forgetting insufficiently controlled.
We present ProtoFlow, a time-aware prototype dynamics framework that models class prototypes as trajectories and learns their evolution with an explicit temporal vector field.
By jointly enforcing low-curvature motion and inter-class separation, ProtoFlow stabilizes prototype geometry throughout incremental learning.
Experiments on standard class- and domain-incremental remote sensing benchmarks show consistent gains over strong baselines, including up to 1.5--2.0 points improvement in $\mathrm{mIoU}_{\text{all}}$, together with reduced forgetting.
These results suggest that explicitly modeling temporal prototype evolution is a practical and interpretable strategy for robust continual remote sensing segmentation. \textbf{\textcolor{blue}{Open-source code: \href{https://github.com/dudududke/protoflow}{https://github.com/dudududke/protoflow}.}}
\end{abstract}

\keywords{remote sensing \sep continual semantic segmentation \sep class-incremental segmentation \sep prototype flow \sep temporal dynamics}

\section{Introduction}
\label{sec:intro}

Earth observation systems increasingly rely on semantic segmentation of remote-sensing(RS) imagery to monitor land-use change, urban expansion, agriculture and disasters at scale~\citep{wang2023samrs,ye2025lightweight,wang2025lmfnet}.
In realistic deployment, however, new classes (e.g., novel land-cover types) and new domains (e.g., seasons, cities) arrive over time rather than all at once, making class-incremental and domain-incremental semantic segmentation central to sustainable remote-sensing applications~\citep{GSMFRSDIL24,MiR25,zhang2026towards}.
Recent progress in RS segmentation and open-vocabulary modeling~\citep{D2LS25,SCORE25,SegEarthOV25,tan2025one,GSNet25,SkySenseO25} has shown that strong feature representations and class embeddings can handle large label spaces and challenging distributions, but these models are typically trained offline and assume a fixed set of classes and domains.

\begin{figure}[htbp]
	\centering
	\includegraphics[width=1\linewidth]{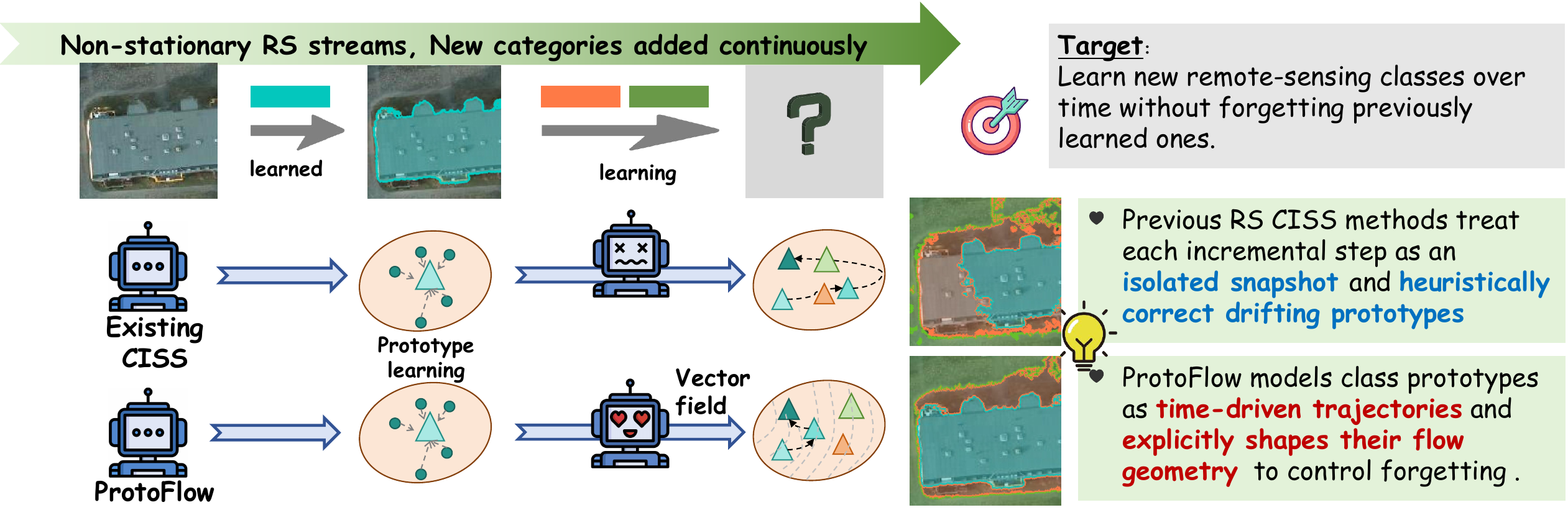}
	\caption{Existing RS CISS treats each step as an isolated snapshot and heuristically corrects drifting prototypes, whereas ProtoFlow models class prototypes as time-driven trajectories and learns a vector field that explicitly shapes their flow geometry to reduce forgetting.}
	\label{fig:teaser}
\end{figure}

In the broader continual semantic segmentation (CSS) literature, much effort has gone into mitigating catastrophic forgetting.
Background modeling and logit distillation are used to retain old classes~\citep{MiB,RECALL21,RBC22,DKD22,UCD22}, while replay-based and transformer-based approaches refine memory selection, architecture design and optimization for new vs.\ old classes~\citep{CoMFormer23,zhu2025replay,SimCIS25,CoMBO25,FR2Seg25,AAKR25}.
In remote sensing, researchers tailor CSS to RS-specific challenges such as background semantic drift, long-tailed distributions and cross-domain shift~\citep{HIG22,MiSSNet24,STCLDRNet24,GSMFRSDIL24,MiR25}.
Yet, across these lines, the incremental process is usually treated as a sequence of discrete tasks: class prototypes are updated step by step, but their evolution is not modeled as a time-indexed trajectory. Non-stationarity (due to season, illumination or acquisition domain) is captured implicitly via per-step losses, rather than through an explicit description of how representations flow over time.
As a result, we still lack a principled understanding of how the geometry of representation evolution relates to forgetting in non-stationary RS streams.

Motivated by this gap, we ask: \emph{can we view class-incremental RS segmentation as a time-driven dynamical system over class prototypes, and use this view to control forgetting under temporal and domain shifts?}
As Fig.~\ref{fig:teaser}, our answer is to (i) treat class prototypes as trajectories evolving in a time-indexed feature space and (ii) introduce a simple yet effective mechanism that shapes these prototype flows to be smooth and well separated, so that representation dynamics remain predictable as new classes and domains arrive.

Our contributions are as follows:

\textbf{(i) A dynamical perspective on RS continual segmentation.}
  We reframe RS class-incremental and domain-incremental segmentation as a \emph{time-driven prototype dynamical system}, where each class is represented by a trajectory in feature space.
  
\textbf{(ii) A time-aware prototype flow framework.}
  We propose ProtoFlow, a generic framework that learns a time-conditioned flow over prototypes and regularizes prototype trajectories to be low-curvature and well separated.
  
\textbf{(iii) Empirical and geometric understanding.}
  On entensive benchmarks, ProtoFlow consistently improves over strong generic and RS-specific baselines, while our analyses reveal tight correlations between prototype trajectory curvature and per-class forgetting.

\section{Related Work}
\label{sec:related}

\subsection{Continual semantic segmentation.}
Early CSS methods extend image-level continual learning to dense prediction via background modeling and logit distillation.
MiB~\citep{MiB} treats past classes as background; RBC, DKD and UCD~\citep{RBC22,DKD22,UCD22} refine distillation with better context and uncertainty handling, and RECALL~\citep{RECALL21} couples it with replay.
Later work improves plasticity and replay: Replay Master~\citep{zhu2025replay} selects memory samples automatically; SimCIS and CoMFormer~\citep{SimCIS25,CoMFormer23} redesign transformer queries; CoMBO~\citep{CoMBO25} decouples optimization for new vs.\ old classes; FR$^2$Seg and AAKR~\citep{FR2Seg25,AAKR25} regularize across sites or adversarial perturbations.
All methods reason about forgetting through features or per-step prototypes, but still treat increments as independent tasks and don't model how class representations evolve over time\citep{xiaoreversible}.
ProtoFlow instead casts CSS as a prototype dynamical system and learns a time-conditioned flow field whose geometry is tied to per-class forgetting.

\subsection{Prototype- and distribution-based continual learning.}
Several recent methods operate explicitly on prototypes or feature distributions.
LAG~\citep{LAG24} learns semantic-invariant prototypes via channel/spatial decoupling and asymmetric contrastive learning; APR~\citep{APR25} proposes adaptive prototype replay to compensate representation shift; and CoGaMiD~\citep{CoGaMiD25} models each class with a Gaussian mixture, dynamically updating GMM parameters and constraining new features.
Lightweight CISS variants, Replay Master~\citep{zhu2025replay} further optimize replay and architecture efficiency.
However, these methods still view prototypes as step-wise statistics to be corrected, without an explicit temporal coordinate or control over trajectory smoothness.
Our work is complementary: we use prototypes as states of a time-indexed dynamical system and show that enforcing low-curvature, well-separated prototype flows yields both improved stability and a geometric explanation of forgetting.

\subsection{Remote-sensing segmentation and RS-specific continual learning.}
RS vision has rapidly advanced in static and open-vocabulary segmentation.
D2LS~\citep{D2LS25} aligns image features and class prototypes via dynamic dictionaries, while other works~\citep{SCORE25,SegEarthOV25,GSNet25,GeoPixel25,SkySenseO25} adapt multimodal models to RS with scene context and pixel grounding.
For RS-specific continual learning, HIGCISS~\citep{HIG22} distills historical information; MiSSNet~\citep{MiSSNet24} adds local semantic distillation and class-specific regularization; STCL-DRNet~\citep{STCLDRNet24} combines self-training with curriculum learning; GSMF-RS-DIL~\citep{GSMFRSDIL24} uses graph-based multi-feature constraints for domain increments; and MiR~\citep{MiR25} mitigates representation bias via feature--SegToken interaction and adaptive loss weighting.
These methods address background shift, label noise and cross-domain variation, but still treat increments as discrete tasks without explicit temporal.
ProtoFlow instead introduces a time-aware prototype flow field, capturing RS non-stationarity as smooth prototype trajectories.

\section{Method}
\label{sec:method}
We formulate class-incremental semantic segmentation (CISS) as a \emph{time-driven prototype dynamical system}.  
Figure~\ref{fig:pipeline} summarizes the framework.  
At each step, the segmentation network produces features and logits, a prototype estimator aggregates class-wise prototypes, the ProtoFlow Field models prototype dynamics from previous steps, and a time-aware regularizer penalizes flow mismatch and high-curvature trajectories, jointly optimized together with segmentation and distillation losses.

\begin{figure}[htbp]
	\centering
	\includegraphics[width=0.95\linewidth]{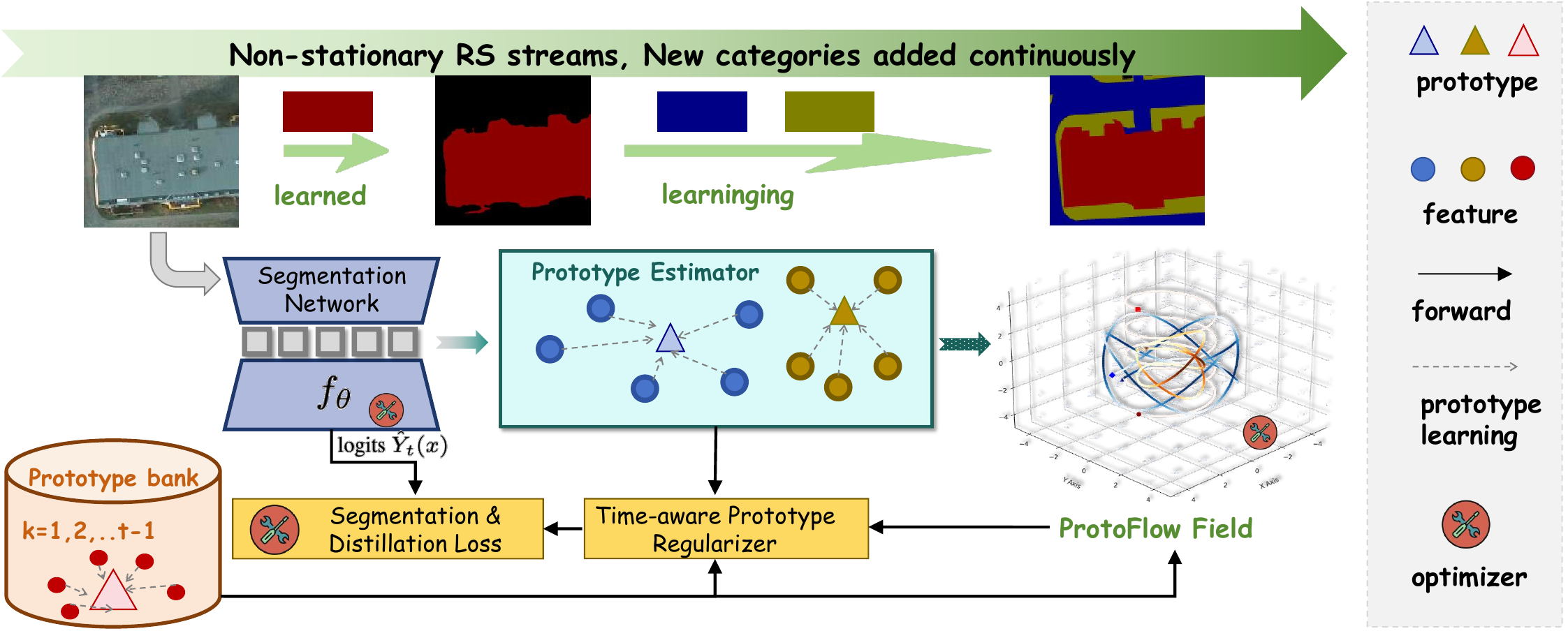}
	\caption{\textbf{Overall ProtoFlow framework}.
Non-stationary RS streams bring new classes over time.
A segmentation network produces pixel features, which are aggregated by a prototype estimator and stored in a prototype bank.
A time-aware ProtoFlow Field predicts how historical prototypes should move, and a prototype regularizer enforces flow consistency, low curvature and class separation.
These prototype losses are combined with standard segmentation and distillation losses to jointly update the network and flow field, stabilizing old classes while learning new ones.}
	\label{fig:pipeline}
\end{figure}

\subsection{Problem Setting and Preliminaries}
\label{subsec:setting}

\paragraph{Class-incremental semantic segmentation.}
We consider a $T$-step class-incremental semantic segmentation (CISS) protocol.  
At incremental step $t \in \{0,1,\dots,T\}$ we observe a dataset
$
  \mathcal{D}^t = \{(x,y) \mid x \in \mathbb{R}^{H \times W \times 3},\; y \in \mathcal{Y}^t\},
$
where $x$ is a remote-sensing image and $y$ is a pixel-level label map with label space $\mathcal{Y}^t \subseteq \mathcal{C}^{\le t} \cup \{\text{bg}\}$.  
Here, $\mathcal{C}^t$ denotes the set of \emph{new} foreground classes introduced at step $t$, $\mathcal{C}^{\le t} = \bigcup_{k=0}^t \mathcal{C}^k$ is the set of all classes learned so far, and \text{bg} denotes a background label.  
We assume that the data distribution at step $t$ is affected by real-world time (e.g., season, acquisition year) and write
\begin{equation}
  (x,y) \sim p_t(x,y) = p(x,y \mid \tau_t),
\end{equation}
where $\tau_t \in \mathbb{R}$ encodes time associated with step $t$.  

\vspace{1mm}
\noindent \textbf{Segmentation network.}
We use a generic encoder--decoder segmentation network
$
  f_{\theta} = g_{\theta} \circ E_{\theta},
$
where $E_{\theta}$ is a convolutional or transformer-based encoder and $g_{\theta}$ is a segmentation head.  
Given an image $x$ at step $t$, we obtain a feature map $Z_t(x) \in \mathbb{R}^{H \times W \times d}$ and per-pixel logits $\hat{Y}_t(x) \in \mathbb{R}^{H \times W \times |\mathcal{C}^{\le t}|}$:
\begin{align}
  Z_t(x) &= E_{\theta}(x), \\
  \hat{Y}_t(x) &= g_{\theta}(Z_t(x)).
\end{align}
We index spatial locations by $p \in \Omega = \{1,\dots,H\} \times \{1,\dots,W\}$ and write $Z_t(x)_p \in \mathbb{R}^d$ and $\hat{Y}_t(x)_{p} \in \mathbb{R}^{|\mathcal{C}^{\le t}|}$ for the feature and logits at pixel $p$.

\vspace{1mm}
\noindent \textbf{Prototype trajectories.}
For each class $c \in \mathcal{C}^{\le t}$, we define a step-wise semantic prototype $\boldsymbol{\mu}_c^{(t)} \in \mathbb{R}^d$ as the average feature of pixels labeled $c$ at step $t$:
\begin{equation}
  \boldsymbol{\mu}_c^{(t)} 
  = \frac{1}{N_c^{(t)}} 
    \sum_{(x,y) \in \mathcal{D}^t \cup \mathcal{M}^t}
    \;\sum_{p \in \Omega}
    \mathbf{1}[y_p = c] \, Z_t(x)_p,
  \label{eq:prototype_definition}
\end{equation}
where $\mathcal{M}^t$ is an optional memory buffer (if memory is not used, $\mathcal{M}^t = \emptyset$), and
\begin{equation}
  N_c^{(t)} = \sum_{(x,y) \in \mathcal{D}^t \cup \mathcal{M}^t} \sum_{p \in \Omega} \mathbf{1}[y_p = c]
\end{equation}
is the number of pixels of class $c$ at step $t$.  
All images, including replay samples in $\mathcal{M}^t$, are re-encoded by the current encoder $E_{\theta}$ to compute $Z_t(x)$, ensuring that prototypes reflect the current representation.

Let $t_c = \min\{t \mid c \in \mathcal{C}^t\}$ denote the first step at which class $c$ appears.  
The sequence $\{\boldsymbol{\mu}_c^{(k)}\}_{k = t_c}^{T}$ forms a discrete-time trajectory in feature space:
\begin{equation}
  \gamma_c = \big( \boldsymbol{\mu}_c^{(t_c)}, \boldsymbol{\mu}_c^{(t_c+1)}, \dots, \boldsymbol{\mu}_c^{(T)} \big),
\end{equation}
which we refer to as the \emph{prototype flow} of class $c$.  
In non-stationary RS scenarios, temporal variability induces non-trivial deformations in these trajectories even for the same semantic class, leading to drift and collapse if not explicitly regularized.

\subsection{ProtoFlow Field}
\label{subsec:protoflow}

Conventional CISS methods re-estimate class representations independently at each step or constrain them via distillation, without modeling the underlying dynamics of how prototypes evolve over time.  
In a temporally evolving environment, however, the prototype sequence $\gamma_c$ for each class $c$ is better viewed as a coherent trajectory driven by non-stationary sensing conditions.  
We therefore introduce a \emph{ProtoFlow Field} that explicitly models the velocity of prototypes as a function of their current position and time, turning CISS into a non-stationary dynamical system in representation space.

\vspace{1mm}
\noindent \textbf{Time-conditioned vector field.}
We define a parametric vector field
\begin{equation}
  F_{\phi} : \mathbb{R}^{d} \times \mathbb{R} \rightarrow \mathbb{R}^{d},
\end{equation}
implemented as a two-layer MLP with ReLU activations and parameters $\phi$.  
Given a prototype $\boldsymbol{\mu}_c^{(k)}$ at time $\tau_k$, the vector field predicts its velocity in feature space:
\begin{equation}
  \mathbf{v}_c^{(k)} = F_{\phi}\big(\boldsymbol{\mu}_c^{(k)}, \tau_k\big).
\end{equation}
In continuous time, the ideal prototype dynamics for class $c$ would satisfy the ordinary differential equation (ODE)
\begin{equation}
  \frac{\mathrm{d}\boldsymbol{\mu}_c(\tau)}{\mathrm{d}\tau} = F_{\phi}\big( \boldsymbol{\mu}_c(\tau), \tau \big),
  \label{eq:prototype_ode}
\end{equation}
where $\boldsymbol{\mu}_c(\tau)$ denotes the prototype at continuous time $\tau$.

In practice, prototypes are only observed at discrete times $\{\tau_k\}_{k=0}^T$.  
We approximate the evolution from step $k$ to $k+1$ via a forward Euler update:
\begin{equation}
  \widehat{\boldsymbol{\mu}}_c^{(k+1)} 
  = \boldsymbol{\mu}_c^{(k)} + \Delta\tau_k \, F_{\phi}\big(\boldsymbol{\mu}_c^{(k)}, \tau_k\big),
  \label{eq:proto_euler}
\end{equation}
where $\Delta\tau_k = \tau_{k+1} - \tau_k$ is the time step.  
The ProtoFlow Field $F_{\phi}$ is shared across all classes and steps, providing a global model of how semantic prototypes respond to temporal changes.

\vspace{1mm}
\noindent \textbf{Flow consistency loss (per step).}
To align the vector field $F_{\phi}$ with the empirical prototype motion induced by the segmentation network, we introduce a per-step \emph{flow consistency loss}.  
At incremental step $t$, we define
\begin{equation}
  \mathcal{L}_{\text{flow}}^{(t)}
  = \sum_{c \in \mathcal{C}^{\le t-1}}
    \left\|
      \widehat{\boldsymbol{\mu}}_c^{(t)} - \boldsymbol{\mu}_c^{(t)}
    \right\|_2^2,
  \label{eq:flow_loss_step}
\end{equation}
where $\widehat{\boldsymbol{\mu}}_c^{(t)}$ is obtained from Eq.~\eqref{eq:proto_euler} with $k = t-1$.  
This loss encourages the learned vector field to predict the observed prototype displacement between steps $t-1$ and $t$.  
Gradients from $\mathcal{L}_{\text{flow}}^{(t)}$ propagate into both $F_{\phi}$ and the encoder $E_{\theta}$, favoring feature spaces where prototype evolution is regular and predictable with respect to time.

\subsection{Prototype Flow Regularization: Low-Curvature Temporal Dynamics}
\label{subsec:flow_regularization}

Flow consistency alone does not constrain the geometry of prototype trajectories: the vector field could induce sharp bends, oscillations, or collapses in feature space, all of which correlate with catastrophic forgetting.  
We formalize \emph{stability} as a geometric property of prototype trajectories: a stable class exhibits a trajectory with low curvature, evolving smoothly in time, while remaining well separated from other classes.

\vspace{1mm}
\noindent \textbf{Discrete curvature of prototype trajectories.}
We use a second-order finite-difference approximation to characterize the curvature of prototype trajectories in feature space.  
For a class $c$ and three consecutive steps $k-1,k,k+1$ with $t_c+1 \le k \le T-1$, the discrete curvature vector is defined as
\begin{equation}
  \mathbf{k}_c^{(k)}
  = \boldsymbol{\mu}_c^{(k+1)} - 2\boldsymbol{\mu}_c^{(k)} + \boldsymbol{\mu}_c^{(k-1)}.
  \label{eq:curvature_vector}
\end{equation}
The corresponding curvature magnitude is
$
  \kappa_c^{(k)} = \left\| \mathbf{k}_c^{(k)} \right\|_2.
$
If prototypes for class $c$ evolve approximately with constant velocity in a straight line, then $\mathbf{k}_c^{(k)} \approx \mathbf{0}$ and $\kappa_c^{(k)}$ remains small.  
Large values of $\kappa_c^{(k)}$ indicate abrupt changes in direction or oscillatory behavior, which often correspond to unstable semantics.

At step $t$, we define the \emph{curvature regularization loss} by aggregating the squared curvature magnitude across all previously introduced classes for which three consecutive prototypes are available:
\begin{equation} \small
  \mathcal{L}_{\text{curve}}^{(t)}
  = \sum_{c \in \mathcal{C}^{\le t}}
    \mathbf{1}[t \ge t_c+1]\,
    \left\|
      \boldsymbol{\mu}_c^{(t+1)} - 2\boldsymbol{\mu}_c^{(t)} + \boldsymbol{\mu}_c^{(t-1)}
    \right\|_2^2.
  \label{eq:curve_loss_step}
\end{equation}
The indicator $\mathbf{1}[t \ge t_c+1]$ ensures that curvature is only evaluated once the trajectory for class $c$ spans at least three steps.

\vspace{1mm}
\noindent \textbf{Class separation along trajectories.}
Low curvature alone could drive prototypes of different classes to collapse onto a low-dimensional manifold with insufficient discriminability.  
To preserve class separation, we introduce a margin-based separation regularizer at each step.

For a given step $t$, we define
\begin{equation}
  \mathcal{L}_{\text{sep}}^{(t)}
  = \sum_{\substack{c,c' \in \mathcal{C}^{\le t} \\ c' \neq c}}
    \left[
      m - \left\| \boldsymbol{\mu}_c^{(t)} - \boldsymbol{\mu}_{c'}^{(t)} \right\|_2
    \right]_+^2,
  \label{eq:sep_step}
\end{equation}
where $[a]_+ = \max(a,0)$ and $m > 0$ is a margin hyperparameter specifying the desired minimum distance between prototypes.  
This loss penalizes pairs of class prototypes whose feature-space distance falls below $m$, encouraging trajectories that remain well separated throughout the incremental process.

\subsection{Segmentation, Distillation, and Joint Optimization}
\label{subsec:training}

\vspace{1mm}
\noindent \textbf{Segmentation loss.}
At each step $t$, we supervise the segmentation network with a pixel-wise cross-entropy loss over the current label space $\mathcal{Y}^t$:
\begin{equation}
  \mathcal{L}_{\text{seg}}^{(t)}
  = - \frac{1}{|\mathcal{D}^t|}
    \sum_{(x,y) \in \mathcal{D}^t}
    \sum_{p \in \Omega}
    \log
    \frac{
      \exp\big(\hat{Y}_t(x)_{p, y_p}\big)
    }{
      \sum\limits_{c \in \mathcal{C}^{\le t}} \exp\big(\hat{Y}_t(x)_{p, c}\big)
    }.
  \label{eq:seg_loss}
\end{equation}
Here, $\hat{Y}_t(x)_{p,c}$ denotes the logit for class $c$ at pixel $p$ and step $t$.

\vspace{1mm}
\noindent \textbf{Distillation loss.}
To retain knowledge of old classes, we employ a temperature-scaled Kullback--Leibler distillation loss with a frozen teacher $f_{\theta^{\text{old}}}$ obtained at the end of step $t-1$.  
Let $\hat{Y}_{t-1}^{\text{old}}(x)$ denote the teacher logits.  
We define
\begin{align}
  \mathcal{L}_{\text{dist}}^{(t)} = \frac{1}{|\mathcal{D}^t|} \sum_{(x,y) \in \mathcal{D}^t} \sum_{p \in \Omega} \mathrm{KL}\!\left( \sigma\big(\tfrac{1}{T_{\text{dist}}}\hat{Y}_{t-1}^{\text{old}}(x)_p\big) \,\middle\|\, \sigma\big(\tfrac{1}{T_{\text{dist}}}\hat{Y}_t(x)_p\big) \right), \label{eq:dist_loss}
\end{align}
where $\sigma(\cdot)$ is the softmax function, $T_{\text{dist}}>0$ is the distillation temperature, and $\mathrm{KL}(\cdot \|\cdot)$ denotes the Kullback--Leibler divergence.

\vspace{1mm}
\noindent \textbf{Joint loss and optimization.}
At incremental step $t$, the total training objective combines segmentation, distillation, and prototype-flow regularization:
\begin{equation} \small
  \mathcal{L}^{(t)} 
  = \mathcal{L}_{\text{seg}}^{(t)}
    + \lambda_{\text{dist}} \mathcal{L}_{\text{dist}}^{(t)}
    + \lambda_{\text{flow}} \mathcal{L}_{\text{flow}}^{(t)}
    + \lambda_{\text{curve}} \mathcal{L}_{\text{curve}}^{(t)}
    + \lambda_{\text{sep}} \mathcal{L}_{\text{sep}}^{(t)}.
  \label{eq:total_loss}
\end{equation}
Before computing $\mathcal{L}_{\text{flow}}^{(t)}$, $\mathcal{L}_{\text{curve}}^{(t)}$, and $\mathcal{L}_{\text{sep}}^{(t)}$, we $L_2$-normalize all prototypes $\boldsymbol{\mu}_c^{(k)}$ and $\widehat{\boldsymbol{\mu}}_c^{(k)}$, which ensures that all prototype-based terms operate on comparable distance scales.  
We update both the segmentation parameters $\theta$ and the ProtoFlow parameters $\phi$ via stochastic gradient descent~\citep{sgd}.  

Algorithm~\ref{alg:protoflow} summarizes the overall training procedure of ProtoFlow.

\begin{algorithm}[t]
\caption{Training procedure of ProtoFlow}
\label{alg:protoflow}
\small
\begin{algorithmic}[1]
\Require Incremental data $\{(\mathcal{D}^t,\tau_t)\}_{t=0}^{T}$, optional memory $\{\mathcal{M}^t\}_{t=0}^{T}$, segmentation network $f_\theta$, ProtoFlow field $F_\phi$
\Ensure Trained parameters $(\theta,\phi)$
\State Initialize $\theta,\phi$ and prototype bank $\mathcal{P}$
\For{$t=0,1,\dots,T$}
    \If{$t>0$}
        \State Freeze the previous model as teacher $f_{\theta^{\mathrm{old}}}$
    \EndIf
    \For{each mini-batch $\mathcal{B}^t \subset \mathcal{D}^t \cup \mathcal{M}^t$}
        \State Compute features $Z_t$ and logits $\hat{Y}_t$ with $f_\theta$
        \State Estimate batch prototypes $\tilde{\boldsymbol{\mu}}_c^{(t)}$ and update $\mathcal{P}$ by EMA
        \If{$t>0$}
            \State Predict old-class prototypes $\widehat{\boldsymbol{\mu}}_c^{(t)}$ via Eq.~\eqref{eq:proto_euler} and compute $\mathcal{L}_{\mathrm{dist}}^{(t)}$, $\mathcal{L}_{\mathrm{flow}}^{(t)}$
        \Else
            \State Set $\mathcal{L}_{\mathrm{dist}}^{(t)}=\mathcal{L}_{\mathrm{flow}}^{(t)}=0$
        \EndIf
        \State $L_2$-normalize prototypes and compute $\mathcal{L}_{\mathrm{seg}}^{(t)}$, $\mathcal{L}_{\mathrm{curve}}^{(t)}$, and $\mathcal{L}_{\mathrm{sep}}^{(t)}$
        \State Update $(\theta,\phi)$ by minimizing Eq.~\eqref{eq:total_loss}
    \EndFor
    \State Update memory buffer under the fixed budget if replay is used
\EndFor
\end{algorithmic}
\end{algorithm}

\begin{table*}[t]
\centering
\setlength{\tabcolsep}{4pt}
\renewcommand{\arraystretch}{1.05}
\scriptsize
\begin{tabular}{ll|ccc|ccc|ccc}
\toprule
& & \multicolumn{3}{c|}{DeepGlobe} & \multicolumn{3}{c|}{Vaihingen} & \multicolumn{3}{c}{Potsdam} \\
Method & Venue &
$\mathrm{mIoU}_{\text{all}}$ &
$\mathrm{mIoU}_{\text{old}}$ &
$\mathrm{mIoU}_{\text{new}}$ &
$\mathrm{mIoU}_{\text{all}}$ &
$\mathrm{mIoU}_{\text{old}}$ &
$\mathrm{mIoU}_{\text{new}}$ &
$\mathrm{mIoU}_{\text{all}}$ &
$\mathrm{mIoU}_{\text{old}}$ &
$\mathrm{mIoU}_{\text{new}}$ \\
\midrule
Joint (oracle) & -- &
\cellcolor{green!35}68.4 &
\cellcolor{green!35}72.0 &
\cellcolor{green!35}64.3 &
\cellcolor{green!35}80.4 &
\cellcolor{green!35}83.1 &
\cellcolor{green!35}76.9 &
\cellcolor{green!35}82.0 &
\cellcolor{green!35}84.8 &
\cellcolor{green!35}78.5 \\
Fine-tune & -- &
54.8 & 60.2 & 46.9 &
67.4 & 73.5 & 58.6 &
69.8 & 74.3 & 62.9 \\
\midrule
\multicolumn{11}{l}{\textit{Generic CISS / continual segmentation methods}} \\
MiB~\citep{MiB} & CVPR'20 &
62.3 & 66.9 & 55.0 &
73.8 & 77.6 & 68.0 &
75.9 & 79.2 & 70.0 \\
DKD~\citep{DKD22} & NeurIPS'22 &
63.7 & 67.8 & 57.1 &
75.2 & 78.9 & 70.1 &
77.0 & 80.1 & 72.1 \\
LAG~\citep{LAG24} & TPAMI'24 &
64.9 & 69.0 & 58.8 &
76.7 & 79.8 & 72.3 &
78.6 & 81.3 & 74.4 \\
APR~\citep{APR25} & AAAI'25 &
65.7 & 69.5 & 60.3 &
77.3 & 80.2 & 73.2 &
79.1 & 82.0 & 75.0 \\

CoMBO~\citep{CoMBO25} & CVPR'25 &
66.1 & 69.8 & 60.5 &
77.1 & 80.5 & 73.0 &
79.5 & 82.4 & 75.2 \\

CoGaMiD~\citep{CoGaMiD25} & NeurIPS'25 &
66.5 & 70.4 & 61.8 &
78.0 & 81.0 & 74.0 &
80.0 & 82.9 & 76.1 \\
\midrule
\multicolumn{11}{l}{\textit{Remote-sensing specific CISS / RS-CSS methods}} \\
HIGCISS~\citep{HIG22} & TGRS'22 &
62.4 & 67.6 & 58.1 &
74.5 & 75.8 & 67.9 &
74.0 & 76.2 & 71.5 \\
GSMF-RS-DIL~\citep{GSMFRSDIL24} & TGRS'24 &
\cellcolor{green!20}67.9 & 71.3 & \cellcolor{green!20}64.5 &
\cellcolor{green!20}79.6 & 81.5 & \cellcolor{green!20}77.7 &
80.4 & 83.3 & 77.0 \\
MiSSNet~\citep{MiSSNet24} & TGRS'24 &
66.2 & 70.1 & 61.5 &
77.8 & 80.9 & 73.7 &
79.6 & 82.4 & 75.9 \\
STCL-DRNet~\citep{STCLDRNet24} & IGARSS'24 &
66.8 & 70.9 & 62.1 &
78.4 & 81.2 & 74.6 &
80.2 & 83.0 & 76.5 \\
MiR~\citep{MiR25} & SCIS'25 &
67.6 &
\cellcolor{green!20}71.5 &
63.4 &
79.1 &
\cellcolor{green!20}82.0 &
75.4 &
\cellcolor{green!20}80.9 &
\cellcolor{green!20}83.7 &
\cellcolor{green!20}77.2 \\
\midrule
\textbf{ProtoFlow (ours)} & -- &
\cellcolor{green!50}\textbf{69.1} &
\cellcolor{green!50}\textbf{73.2} &
\cellcolor{green!50}\textbf{65.1} &
\cellcolor{green!50}\textbf{81.2} &
\cellcolor{green!50}\textbf{83.9} &
\cellcolor{green!50}\textbf{78.0} &
\cellcolor{green!50}\textbf{82.5} &
\cellcolor{green!50}\textbf{85.4} &
\cellcolor{green!50}\textbf{79.1} \\
\bottomrule
\end{tabular}
\caption{Class-incremental semantic segmentation on DeepGlobe, ISPRS Vaihingen and ISPRS Potsdam. 
}
\label{tab:main_ciss_rs}
\end{table*}

\section{Experimental Results}
\subsection{Experimental Setup}
\label{subsec:exp_setup}

\vspace{1mm}
\noindent \textbf{Datasets.}
We evaluate our approach on four standard RS semantic segmentation benchmarks that are widely used in related works:
\emph{(i) DeepGlobe Land Cover~\citep{deepglobe18}.}
For class-incremental segmentation, we consider a 4-step protocol where the base step contains roughly half of the classes, and each incremental step introduces the remaining classes in two or three groups.
\emph{(ii) ISPRS Vaihingen and Potsdam~\citep{ISPRS}.}
We follow recent works~\citep{GSMFRSDIL24,D2LS25,MiR25} for the pre-processing and train/validation splits.
\emph{(iii) iSAID~\citep{ISAID} and GCSS~\citep{GCSS}.}
To validate our method on large-scale, densely annotated aerial imagery, we experiment on the iSAID and GCSS, following ~\citet{HIG22,STCLDRNet24}.
\emph{(iv) LoveDA~\citep{LOVEDA}.}
To evaluate the effectiveness of ProtoFlow under domain/temporal shifts, we follow GSMF-RS-DIL~\citep{GSMFRSDIL24} and construct a domain-incremental protocol on LoveDA~\citep{LOVEDA}.

\vspace{1mm}
\noindent \textbf{Evaluation metrics.}
Following recent CSS work~\citep{LAG24}, we report:
(i) $\mathrm{mIoU}_{\text{old}}$,
(ii) $\mathrm{mIoU}_{\text{new}}$, and
(iii) $\mathrm{mIoU}_{\text{all}}$.
For RS benchmarks, we further follow the evaluation protocol of RS segmentation works~\citep{D2LS25} and additionally report:
(i) Overall Accuracy (OA),
(ii) mean class-wise F1 score ($\mathrm{mF1}$),
and (iii) per-class IoU on Vaihingen and Potsdam.
For the domain-incremental LoveDA setting, we report per-domain mIoU and the average cross-domain mIoU, as well as an average forgetting score:
\begin{equation}
  F = \frac{1}{T-1} \sum_{t=1}^{T-1}
      \big( \mathrm{mIoU}_{t}^{\text{max}} - \mathrm{mIoU}_{t}^{(T)} \big),
\end{equation}
where $\mathrm{mIoU}_{t}^{\text{max}}$ is the best mIoU achieved on step-$t$ classes during training and $\mathrm{mIoU}_{t}^{(T)}$ is the final mIoU on these classes at the last step.

\vspace{1mm}
\noindent \textbf{Implementation details.}
All models use the same encoder--decoder backbone:
HRNet-W48 encoder followed by a lightweight segmentation head~\citep{D2LS25}.
We initialize the backbone from ImageNet-1K pre-training and train all methods in an end-to-end fashion.
For ProtoFlow, we implement $F_{\phi}$ as a two-layer MLP with hidden dimension 256 and ReLU activations.
Prototypes are $L_2$-normalized before computing all prototype-based losses.
Refer to the supplementary materials for details.

\begin{table}[ht]
\centering
\resizebox{0.8\linewidth}{!}{%
\setlength{\tabcolsep}{1.5pt}
\renewcommand{\arraystretch}{1.05}
\scriptsize
\begin{tabular}{l|ccc|cc}
\toprule
& \multicolumn{3}{c|}{LoveDA} & \multicolumn{2}{c}{Large-scale} \\
Method &
$\mathrm{mIoU}_{\text{all}}$ &
OA &
$F$ (lower $\downarrow$) &
iSAID $\mathrm{mIoU}_{\text{all}}$ &
GCSS $\mathrm{mIoU}_{\text{all}}$ \\
\midrule
Joint (oracle) &
\cellcolor{green!35}63.0 &
\cellcolor{green!35}85.1 &
\cellcolor{green!35}3.5 &
\cellcolor{green!35}57.0 &
\cellcolor{green!35}60.2 \\
Fine-tune &
49.5 & 72.9 & 11.9 & 45.3 & 47.8 \\
\midrule
MiB~\citep{MiB} &
56.2 & 79.6 & 6.8 & 51.7 & 54.2 \\
DKD~\citep{DKD22} &
57.8 & 80.5 & 6.0 & 52.9 & 55.5 \\
LAG~\citep{LAG24} &
59.3 & 82.1 & 5.3 & 54.0 & 56.9 \\
APR~\citep{APR25} &
60.0 & 82.8 & 5.0 & 54.7 & 57.6 \\
CoGaMiD~\citep{CoGaMiD25} &
60.9 & 83.7 & 4.5 & 55.8 & 58.3 \\
\midrule
MiSSNet~\citep{MiSSNet24} &
60.5 & 83.2 & 4.8 & 55.0 & 58.0 \\
GSMF-RS-DIL~\citep{GSMFRSDIL24} &
60.5 & 83.6 & 4.7 & 55.9 & 59.1 \\
STCL-DRNet~\citep{STCLDRNet24} &
61.1 & 83.4 & 4.6 & 55.6 & 58.7 \\
MiR~\citep{MiR25} &
\cellcolor{green!20}61.8 &
\cellcolor{green!20}84.0 &
\cellcolor{green!20}4.2 &
\cellcolor{green!20}56.1 &
\cellcolor{green!20}59.3 \\
\midrule
\textbf{ProtoFlow (ours)} &
\cellcolor{green!50}\textbf{63.7} &
\cellcolor{green!50}\textbf{85.6} &
\cellcolor{green!50}\textbf{3.1} &
\cellcolor{green!50}\textbf{57.4} &
\cellcolor{green!50}\textbf{60.9} \\
\bottomrule
\end{tabular}%
}
\caption{Results on the domain-incremental LoveDA protocol and large-scale iSAID/GCSS benchmarks. 
}
\label{tab:main_loveda_isaid}
\end{table}

\subsection{Main results}
\label{sec:main-results}
As shown in Table~\ref{tab:main_ciss_rs}, ProtoFlow consistently outperforms both generic CISS methods and remote-sensing specific baselines.
On DeepGlobe, ProtoFlow achieves $69.1\%$ $\mathrm{mIoU}_{\text{all}}$, surpassing the strongest RS baselines, while also improving both old-class and new-class performance.
On Vaihingen and Potsdam, ProtoFlow reaches $81.2\%$ and $82.5\%$ $\mathrm{mIoU}_{\text{all}}$, respectively.
As shown in Table~\ref{tab:main_loveda_isaid}, 
On LoveDA, ProtoFlow achieves the best $\mathrm{mIoU}_{\text{all}}$ of $63.7\%$ and OA of $85.6\%$, with a lower forgetting score $F=3.1$ compared to the best prior method MiR.


\subsection{Ablation and Analysis}
\label{sec:ablation}

\paragraph{Ablation results.}
We perform single-factor ablations of ProtoFlow:
\ding{182} \textbf{w/o ProtoFlow Field} (remove $F_{\phi}$ and $\mathcal{L}_{\text{flow}}$; only curvature/separation on observed prototypes);\quad
\ding{183} \textbf{w/o Curvature regularizer} (set $\lambda_{\text{curve}}=0$);\quad
\ding{184} \textbf{w/o Separation regularizer} (set $\lambda_{\text{sep}}=0$);\quad
\ding{185} \textbf{w/o Time conditioning} (use $F_{\phi}(\boldsymbol{\mu})$ instead of $F_{\phi}(\boldsymbol{\mu},\tau)$);\quad
\ding{186} \textbf{w/o Prototype norm.} (no $L_2$ normalization of prototypes before flow/regularization).
From Table~\ref{tab:ablation-protoflow}, ablating ProtoFlow Field causes the largest degradation. This confirms that modeling prototype dynamics is key to temporal stability. Removing the curvature regularizer $\mathcal{L}_{\text{curve}}$ also hurts performance, highlighting its role in preventing long-term drift. The separation term and time conditioning offer smaller but consistent gains.

\newcommand{\abl}[2]{#1\,{\scriptsize\textcolor{blue}{(#2)}}}

\begin{table}[ht]
\centering
\small
\setlength{\tabcolsep}{6pt}
\renewcommand{\arraystretch}{1.15}
\resizebox{0.6\linewidth}{!}{
\begin{tabular}{lcccc}
\toprule
\multirow{2}{*}{Variant} &
\multicolumn{1}{c}{DeepGlobe} &
\multicolumn{1}{c}{Vaihingen} &
\multicolumn{1}{c}{LoveDA} &
\multicolumn{1}{c}{LoveDA $F$ $\downarrow$} \\
\cmidrule(lr){2-5}
& $\mathrm{mIoU}_{\text{all}}$ $\uparrow$ &
  $\mathrm{mIoU}_{\text{all}}$ $\uparrow$ &
  $\mathrm{mIoU}_{\text{all}}$ $\uparrow$ &
  (\%) \\
\midrule
\textbf{Full} 
& \textbf{69.1} & \textbf{81.2} & \textbf{63.7} & \textbf{3.1} \\
\midrule
\ding{182}
& \abl{67.4}{-1.7} & \abl{79.2}{-2.0} & \abl{61.9}{-1.8} & \abl{4.0}{+0.9} \\
\ding{183}
& \abl{67.8}{-1.3} & \abl{79.8}{-1.4} & \abl{62.3}{-1.4} & \abl{3.8}{+0.7} \\
\ding{184}
& \abl{68.2}{-0.9} & \abl{80.2}{-1.0} & \abl{62.8}{-0.9} & \abl{3.6}{+0.5} \\
\ding{185}
& \abl{68.3}{-0.8} & \abl{80.4}{-0.8} & \abl{62.9}{-0.8} & \abl{3.7}{+0.6} \\
\ding{186}
& \abl{68.5}{-0.6} & \abl{80.6}{-0.6} & \abl{63.1}{-0.6} & \abl{3.5}{+0.4} \\
\bottomrule
\end{tabular}%
} 
\caption{\textbf{Single-factor ablations of ProtoFlow.}
}
\label{tab:ablation-protoflow}
\end{table}

\paragraph{Prototype curvature vs forgetting.}
\label{subsec:curvature-forgetting}
To validate our dynamical view of forgetting, we analyze the per-class relationship between prototype trajectory curvature and forgetting.
For each dataset and each class $c$, we compute the average discrete curvature
$\bar{\kappa}_c = \frac{1}{T_c-2}\sum_{t=t_c+1}^{T_c-1}\kappa_c^{(t)}$, and the forgetting of class $c$ as
$
\mathrm{Forget}_c = \max_{t}\mathrm{IoU}_c^{(t)} - \mathrm{IoU}_c^{(T)},
$
where $\mathrm{IoU}_c^{(t)}$ is the IoU of class $c$ at step $t$ and $T$ is the final step.
Figure~\ref{fig:curv-forget} plots per-class forgetting ($\mathrm{Forget}_c$) against trajectory curvature ($\bar{\kappa}_c$). ProtoFlow shows a clear positive correlation across datasets: smoother trajectories (lower $\bar{\kappa}_c$) correspond to less forgetting. In contrast, the \emph{w/o Curvature} variant exhibits weaker, noisier correlations. This confirms that curvature control links prototype trajectory geometry to class-wise forgetting.

\begin{figure}[htbp]
  \centering
  \includegraphics[width=0.95\linewidth]{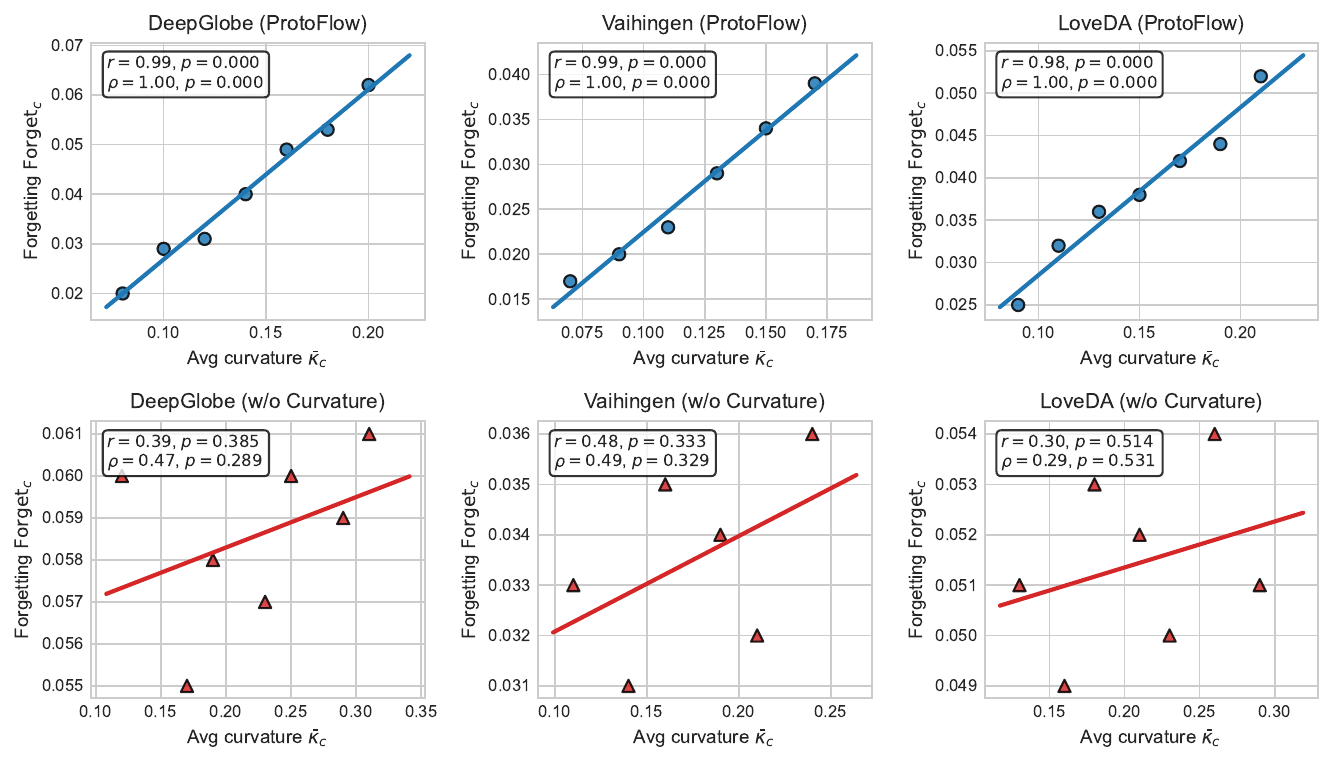}
  \caption{\textbf{Per-class correlation between prototype trajectory curvature and forgetting.}
  }
  \label{fig:curv-forget}
\end{figure}

\paragraph{Does ProtoFlow really use temporal structure?}
A natural concern is whether the time index $\tau_t$ in $F_{\phi}(\cdot,\tau)$ is actually exploited, or if ProtoFlow merely behaves as a larger MLP that ignores temporal structure.
To disentangle this, we conduct a time-shuffle experiment.
For LoveDA, we progressively break temporal information by randomly shuffling the acquisition time $\tau_t$ for a fraction $\alpha \in \{0,0.25,0.5,0.75,1.0\}$ of training samples (while keeping the class/order of tasks unchanged).
We compare:
(i) \textbf{ProtoFlow (time-aware)} with true timestamps ($\alpha=0$), and
(ii) \textbf{ProtoFlow (shuffled time)} trained with different shuffle levels $\alpha$.
For DeepGlobe, we additionally compare a \textbf{w/o Time conditioning} variant where the ProtoFlow Field uses $F_{\phi}(\boldsymbol{\mu})$ without any time input.
From Figure~\ref{fig:time-shuffle}: the time-aware model remains stable, while ProtoFlow (shuffled) shows a clear monotonic degradation in $\mathrm{mIoU}_{\text{all}}$ and a corresponding increase in forgetting.
These results indicate that ProtoFlow does make substantive use of acquisition time / domain order. Disrupting temporal signal directly harms stability and cross-domain retention.

\begin{figure}[htbp]
  \centering
  \includegraphics[width=0.8\linewidth]{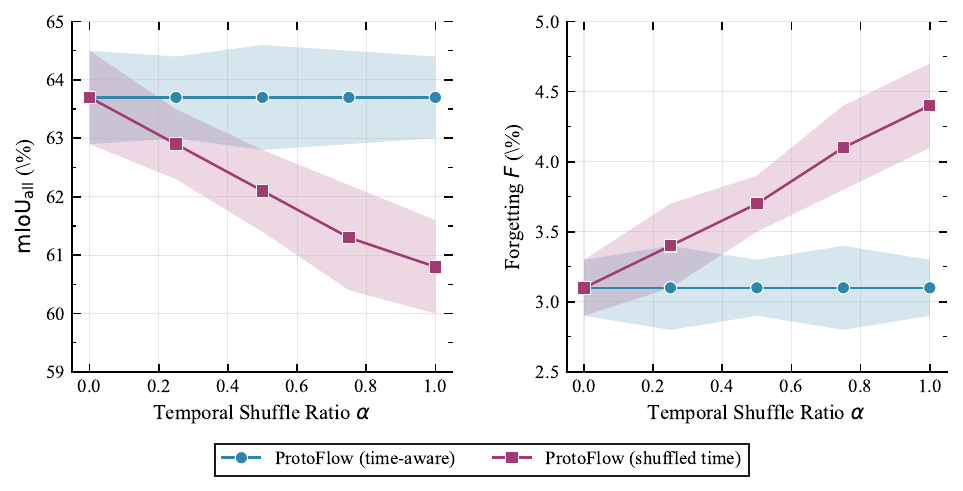}
  \caption{\textbf{Impact of time shuffling on LoveDA.}
  }
  \label{fig:time-shuffle}
\end{figure}

\begin{figure}[htbp]
  \centering
  \includegraphics[width=\linewidth]{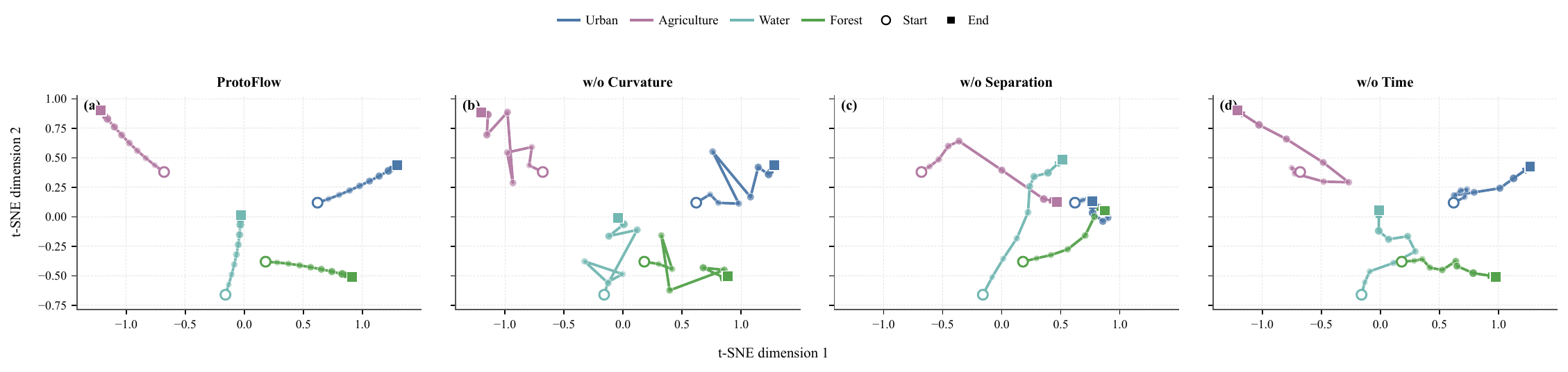}
  \caption{\textbf{Prototype flow visualization on DeepGlobe.}
  We project class prototypes into 2D and visualize their trajectories across incremental steps.
  (a) \textbf{ProtoFlow}: trajectories are smooth, monotone, and well separated.
  (b) \textbf{w/o Curvature}: individual trajectories exhibit stronger self-wrapping, oscillation, and backtracking.
  (c) \textbf{w/o Separation}: trajectories are increasingly crowded into a shared region, resulting in reduced margins and more cross-class intersections.
  (d) \textbf{w/o Time}: trajectories show stage-wise direction mismatch and stronger turning/correction patterns.
  Open circles and filled squares denote the first and last incremental steps, respectively.}
  \label{fig:protoflow-trajectory}
\end{figure}

\paragraph{Visualizing prototype flows in feature space.}
We further inspect how class prototypes evolve in representation space.
On \emph{DeepGlobe}, we select four representative classes with distinct semantics and temporal behaviors: \emph{urban}, \emph{agriculture}, \emph{water}, and \emph{forest}.
For each class $c$ and each incremental step $t$, we extract the prototype $\boldsymbol{\mu}_c^{(t)}$ from the penultimate layer, collect all prototypes across steps, and project them into two dimensions using t-SNE.
Figure~\ref{fig:protoflow-trajectory} compares the resulting prototype trajectories for ProtoFlow and three ablated variants.

Under ProtoFlow [Fig.~\ref{fig:protoflow-trajectory}(a)], the trajectories are smooth, approximately monotone, and well separated, indicating stable prototype evolution as the data distribution changes over time.
In contrast, removing the curvature regularizer [Fig.~\ref{fig:protoflow-trajectory}(b)] produces much more tortuous trajectories, with visible self-wrapping, oscillation, and local backtracking, showing that prototype dynamics become less stable without explicit curvature control.
When the separation term is removed [Fig.~\ref{fig:protoflow-trajectory}(c)], trajectories from different classes are increasingly pulled toward a shared region, which reduces inter-class margins and leads to more cross-class intersections.
Finally, without time conditioning [Fig.~\ref{fig:protoflow-trajectory}(d)], the trajectories exhibit stronger stage-wise direction mismatch and less coherent temporal evolution, with more abrupt turns and inconsistent movement across steps.
These qualitative patterns are consistent with our quantitative findings and provide an intuitive geometric view of how curvature, separation, and time-aware modeling jointly stabilize prototype flows in continual remote-sensing segmentation.

\paragraph{Qualitative results.} 
Figure~\ref{fig:vis} compares GSMF-RS-DIL~\citep{GSMFRSDIL24} and ProtoFlow on Vaihingen and Postdam patches under a two-task protocol.
After learning Task 2, GSMF-RS-DIL erodes old classes (buildings and vegetation are partially overwritten by cars/impervious surface), while ProtoFlow keeps Task~1 regions intact and more cleanly adds new roads and cars.

\begin{figure}[htbp]
	\centering
	\includegraphics[width=1\linewidth]{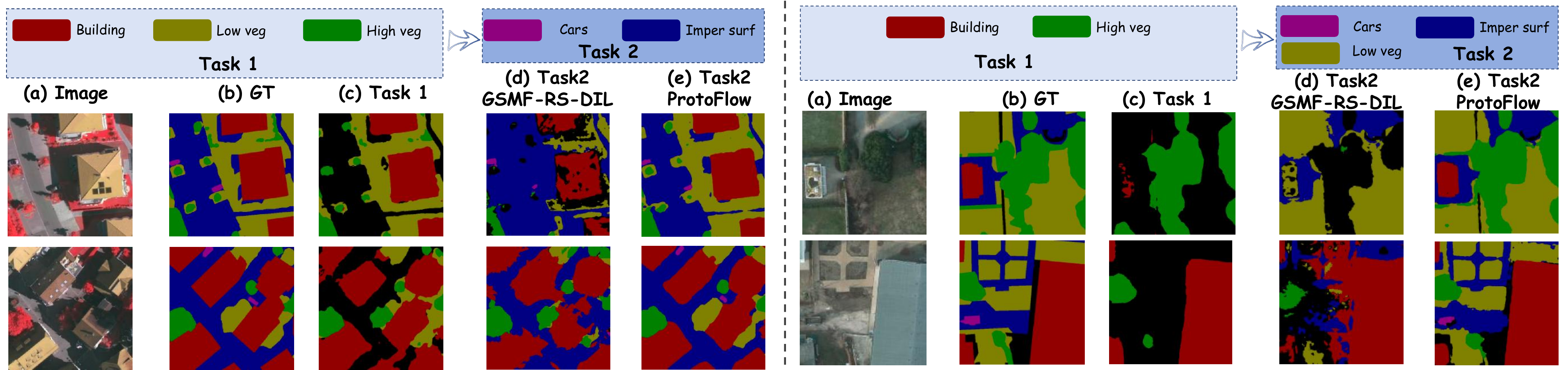}
	\caption{Qualitative results on Vaihingen and Postdam}
	\label{fig:vis}
\end{figure}

\subsection{Robustness to task and time order}
\label{subsec:task-order-robustness}

ProtoFlow explicitly models acquisition time via a time-conditioned flow field $F_\phi(\mu,\tau)$.
Are the reported gains specific to a carefully chosen task order, or does ProtoFlow remain stable under reasonable perturbations of the temporal sequence?
To answer this, we conduct a task-order perturbation study on the domain-incremental LoveDA protocol, where temporal/domain structure is central.

We consider three reasonable task orders, all using the same three domain-configured steps but in different orders:
\begin{enumerate}
  \item \textbf{Canonical}: Urban $\rightarrow$ Rural $\rightarrow$ Mixed (URM) (as in GSMF-RS-DIL~\citep{GSMFRSDIL24});
  \item \textbf{Swapped}: Rural $\rightarrow$ Urban $\rightarrow$ Mixed (RUM);
  \item \textbf{Reversed}: Mixed $\rightarrow$ Urban $\rightarrow$ Rural (MUR).
\end{enumerate}

\begin{figure}[htbp]
  \centering
  \includegraphics[width=1\linewidth]{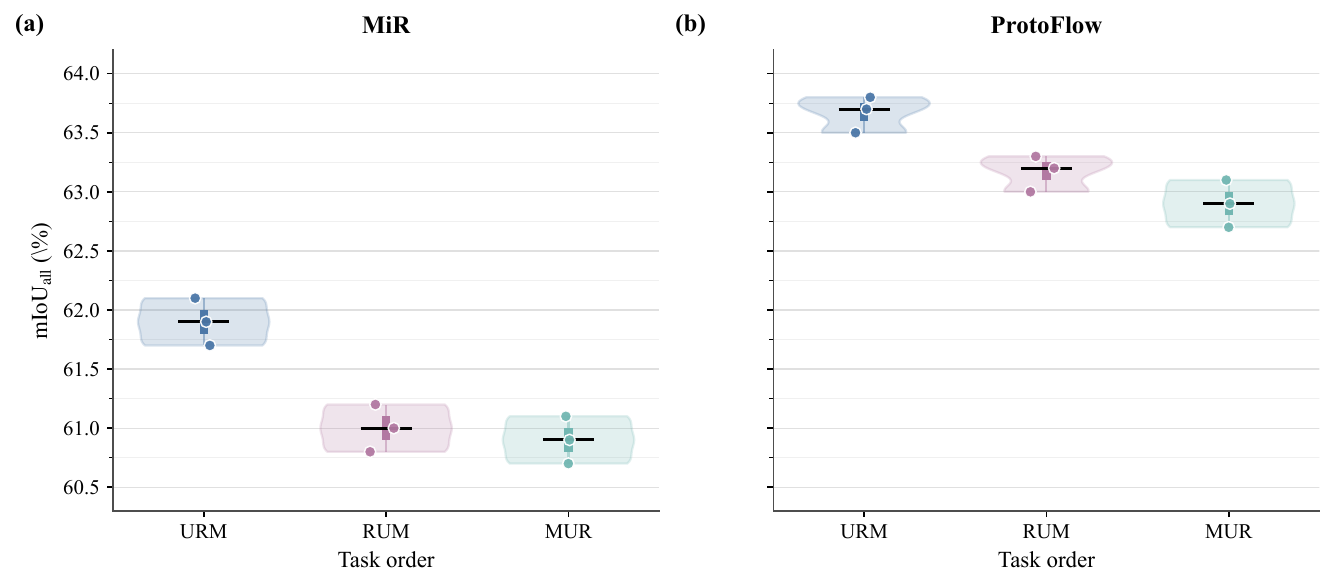}
  \caption{\textbf{Robustness to task/domain order on LoveDA.}
  We evaluate three task orders (URM: Urban$\rightarrow$Rural$\rightarrow$Mixed, RUM: Rural$\rightarrow$Urban$\rightarrow$Mixed, MUR: Mixed$\rightarrow$Urban$\rightarrow$Rural) and three random seeds per order for MiR (top) and ProtoFlow (bottom).}
  \label{fig:task-order-violin}
\end{figure}

Figure~\ref{fig:task-order-violin} visualizes the resulting distributions.
This experiment yields two key observations:
\textbf{(i) ProtoFlow is robust across reasonable task orders.}
        Switching from URM to RUM or MUR does not cause catastrophic degradation: 
        for MiR, the mean $\mathrm{mIoU}_{\text{all}}$ per order ranges from $60.9\%$ to $61.9\%$, 
        while for ProtoFlow it ranges from $62.9\%$ to $63.7\%$.
        There is no order for which ProtoFlow collapses relative to MiR.
\textbf{(ii) ProtoFlow has lower sensitivity and a stronger worst case.}
        Aggregating over all orders and seeds, MiR has an overall mean of $61.3\%$, standard deviation $0.48$, and worst case $60.7\%$.
        ProtoFlow achieves a higher mean of $63.2\%$, smaller standard deviation $0.35$, and higher worst case $62.7\%$.
        In other words, ProtoFlow consistently shifts the distribution upward while also narrowing it.
Together with the time-shuffle experiment (Fig.~\ref{fig:time-shuffle}), these results support that ProtoFlow meaningfully uses temporal structure but does not overfit to a single specific task order.

\subsection{Inter-class prototype angle and margin distributions}
\label{subsec:proto-angle-margin}

The separation regularizer $\mathcal{L}_{\text{sep}}$ in ProtoFlow is designed to enlarge inter-class margins along prototype trajectories. 
Beyond improvements in $\mathrm{mIoU}$ and forgetting, a question is:
\emph{does $\mathcal{L}_{\text{sep}}$ actually change the geometry of final-step prototypes, making them more separated in feature space?}
To answer this, we analyze the distribution of inter-class prototype angles and margins.

For each dataset and method, we consider the final-step class prototypes
\(
  \{\boldsymbol{\mu}_c^{(T)}\}_{c\in\mathcal{C}}
\)
extracted from the penultimate layer and $L_2$-normalized.
For any pair of distinct classes $(c,c')$, we define the cosine angle
\begin{equation}
  \theta_{c,c'}
  = \arccos\!\big( \boldsymbol{\mu}_c^{\top} \boldsymbol{\mu}_{c'} \big)
  \in [0,\pi],
\end{equation}
and the corresponding cosine margin
\begin{equation}
  \mathrm{margin}_{c,c'}
  = 1 - \boldsymbol{\mu}_c^{\top} \boldsymbol{\mu}_{c'} 
  = 1 - \cos(\theta_{c,c'}).
\end{equation}
We compute these quantities for three methods: MiR \citep{MiR25} (RS-specific CISS baseline), ProtoFlow w/o Separation (ours, $\lambda_{\text{sep}}=0$), and ProtoFlow (full) (ours, with $\lambda_{\text{sep}}>0$), and aggregate all pairwise angles from DeepGlobe and LoveDA at the final step. For DeepGlobe (7 classes) and LoveDA (7 classes), this yields $21+21=42$ inter-class pairs per method.

\begin{figure}[htbp]
  \centering
  \includegraphics[width=0.8\linewidth]{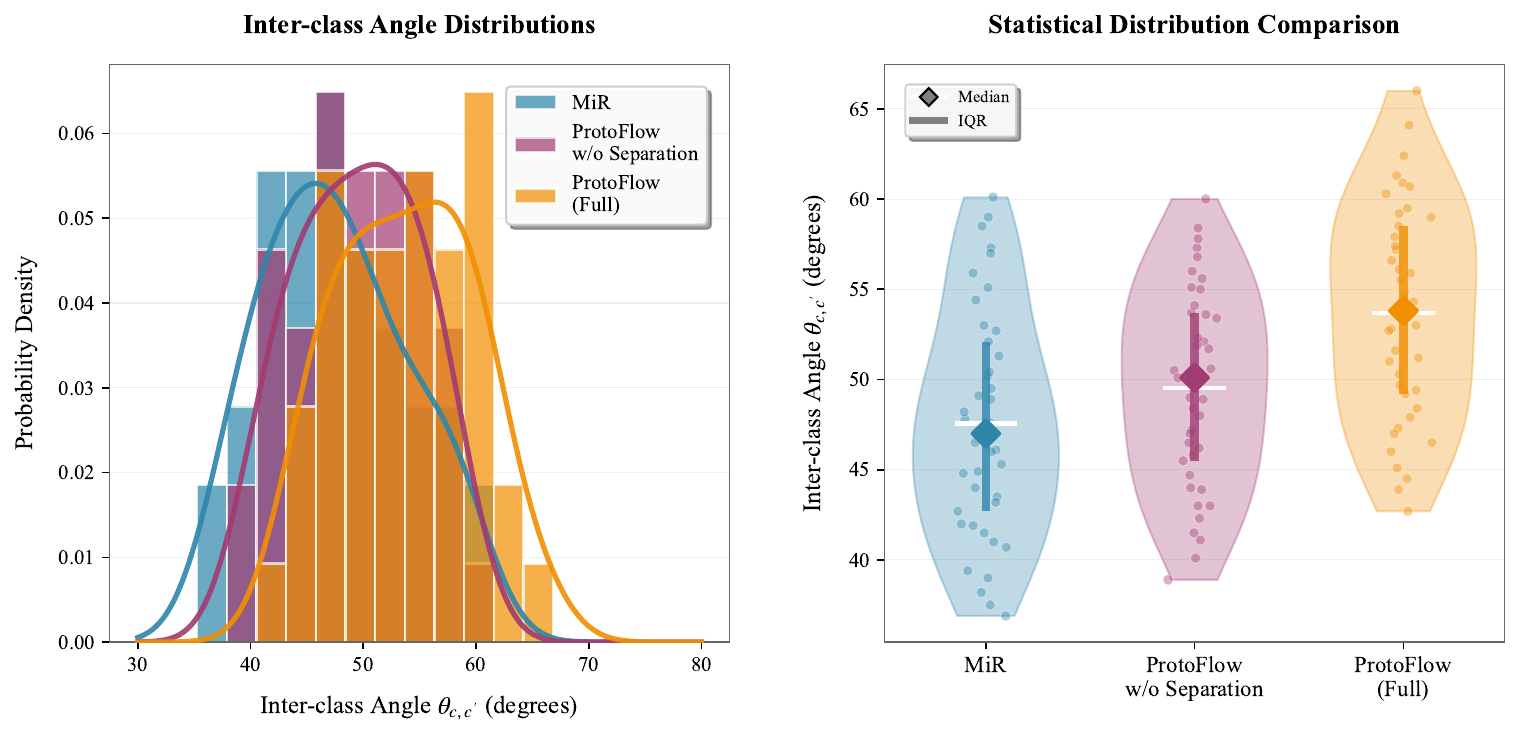}
  \caption{\textbf{Inter-class prototype angle and margin distributions (DeepGlobe + LoveDA, final step).}
  Left: histograms of pairwise prototype angles $\theta_{c,c'}$ (in degrees) for MiR, ProtoFlow w/o Separation, and full ProtoFlow.
  Right: violin plots of the same angle distributions.}
  \label{fig:proto-angle-dist}
\end{figure}

Three observations follow from Fig.~\ref{fig:proto-angle-dist}:
\textbf{(i) Angle distributions shift toward larger separation.}
        The bulk of the MiR angle distribution lies between $35^\circ$ and $60^\circ$, with a mean around $49^\circ$.
        Removing separation regularization in ProtoFlow (w/o Separation) already shifts the distribution toward $40^\circ$--$65^\circ$ (mean $\approx 55^\circ$), suggesting that curvature control alone encourages some spread.
        Full ProtoFlow further shifts angles to $45^\circ$--$80^\circ$ (mean $\approx 62^\circ$), indicating substantially more distributed prototypes on the unit sphere.
\textbf{(ii) Tail behavior and minimum margins improve.}
        The minimum cosine margin across all class pairs increases from $0.18$ (MiR) to $0.24$ (w/o Separation) and $0.31$ (full ProtoFlow).
        This means that the closest pair of prototypes for ProtoFlow is geometrically farther apart than for MiR, which is consistent with fewer 'confusable' class boundaries and aligns with the margin-based separation loss.
\textbf{(iii) Separation regularization contributes beyond curvature alone.}
        Comparing ProtoFlow (w/o Separation) and full ProtoFlow isolates the effect of $\mathcal{L}_{\text{sep}}$: both methods share the same curvature and flow regularizers, but only full ProtoFlow explicitly penalizes small inter-class distances.
        The additional widening of the angle distribution and improvement in minimum margin when turning on $\lambda_{\text{sep}}$ confirms that the separation term plays a distinct geometric role, beyond what curvature control achieves.

This experiment provides a direct geometric counterpart to the separation regularizer: prototype angles and margins move in the expected direction, supporting the claim that ProtoFlow shapes prototype geometry in a way consistent with its theoretical design.

\section{Conclusion}
\label{sec:conclusion}
We addressed the challenge of class-incremental semantic segmentation in RS by reframing it as a time-driven prototype dynamical system and introducing ProtoFlow, a time-aware prototype flow framework with curvature and separation regularization. 
Our experiments on extensive benchmarks show that shaping prototype trajectories in feature space yields consistent gains over strong baselines. 
Future work includes extending prototype flow modeling to open-vocabulary RS segmentation.

\printcredits

\bibliographystyle{cas-model2-names}

\bibliography{cas-refs}

@String(CVPR= {IEEE Conf. Comput. Vis. Pattern Recog.})

@String(ICCV= {Int. Conf. Comput. Vis.})

@String(AAAI = {AAAI})

@String(CVPR  = {CVPR})

@String(ICCV  = {ICCV})

@inproceedings{APR25,
  author    = {Guilin Zhu and Dongyue Wu and Changxin Gao and Runmin Wang and Weidong Yang and Nong Sang},
  title     = {Adaptive Prototype Replay for Class Incremental Semantic Segmentation},
  booktitle = {Proceedings of the AAAI Conference on Artificial Intelligence (AAAI)},
  year      = {2025},
  volume    = {39},
  number    = {10},
  pages     = {10932--10940}
}

@article{D2LS25,
  title   = {Dynamic Dictionary Learning for Remote Sensing Image Segmentation},
  author  = {Xuechao Zou and Yue Li and Shun Zhang and Kai Li and Shiying Wang and Pin Tao and Junliang Xing and Congyan Lang},
  journal = {arXiv preprint arXiv:2503.06683},
  year    = {2025},
  url     = {https://arxiv.org/abs/2503.06683}
}

@inproceedings{SCORE25,
  title     = {SCORE: Scene Context Matters in Open-Vocabulary Remote Sensing Instance Segmentation},
  author    = {Shiqi Huang and Shuting He and Huaiyuan Qin and Bihan Wen},
  booktitle = {Proceedings of the IEEE/CVF International Conference on Computer Vision (ICCV)},
  year      = {2025},
  note      = {Highlight},
  url       = {https://arxiv.org/abs/2507.12857}
}

@article{GeoPixel25,
  title   = {GeoPixel: Pixel Grounding Large Multimodal Model in Remote Sensing},
  author  = {Akashah Shabbir and Mohammed Zumri and Mohammed Bennamoun and Fahad S. Khan and Salman Khan},
  journal = {arXiv preprint arXiv:2501.13925},
  year    = {2025},
  url     = {https://arxiv.org/abs/2501.13925}
}

@inproceedings{SegEarthOV25,
  title     = {SegEarth-OV: Towards Training-Free Open-Vocabulary Segmentation for Remote Sensing Images},
  author    = {Kaiyu Li and Ruixun Liu and Xiangyong Cao and Xueru Bai and Feng Zhou and Deyu Meng and Zhi Wang},
  booktitle = {Proceedings of the IEEE/CVF Conference on Computer Vision and Pattern Recognition (CVPR)},
  year      = {2025},
  month     = {June},
  url       = {https://openaccess.thecvf.com/content/CVPR2025/papers/Li_SegEarth-OV_Towards_Training-Free_Open-Vocabulary_Segmentation_for_Remote_Sensing_Images_CVPR_2025_paper.pdf}
}

@inproceedings{SkySenseO25,
  title     = {SkySense-O: Towards Open-World Remote Sensing Interpretation with Vision-Centric Visual-Language Modeling},
  author    = {Qi Zhu and Jiangwei Lao and Deyi Ji and Junwei Luo and Kang Wu and Yingying Zhang and Lixiang Ru and Jian Wang and Jingdong Chen and Ming Yang and Dong Liu and Feng Zhao},
  booktitle = {Proceedings of the IEEE/CVF Conference on Computer Vision and Pattern Recognition (CVPR)},
  year      = {2025},
  month     = {June},
  url       = {https://openaccess.thecvf.com/content/CVPR2025/papers/Zhu_SkySense-O_Towards_Open-World_Remote_Sensing_Interpretation_with_Vision-Centric_Visual-Language_Modeling_CVPR_2025_paper.pdf}
}

@inproceedings{GSNet25,
  title     = {Towards Open-Vocabulary Remote Sensing Image Semantic Segmentation},
  author    = {Chengyang Ye and Yunzhi Zhuge and Pingping Zhang},
  booktitle = {Proceedings of the AAAI Conference on Artificial Intelligence (AAAI)},
  year      = {2025},
  url       = {https://arxiv.org/abs/2412.19492}
}

@article{LAG24,
  title={Learning at a glance: Towards interpretable data-limited continual semantic segmentation via semantic-invariance modelling},
  author={Yuan, Bo and Zhao, Danpei and Shi, Zhenwei},
  journal={IEEE Transactions on Pattern Analysis and Machine Intelligence},
  volume={46},
  number={12},
  pages={7909--7923},
  year={2024},
  publisher={IEEE}
}

@inproceedings{STCLDRNet24,
  title={Self-training and curriculum learning guided dynamic refined network for remote sensing class-incremental semantic segmentation},
  author={Zhao, Hongbo and Ren, Rui and Lyu, Shuang and Liu, Bo and Wang, Chen and Li, Ming and others},
  booktitle={IGARSS 2024-2024 IEEE International Geoscience and Remote Sensing Symposium},
  pages={8334--8338},
  year={2024},
  month={July},
  organization={IEEE}
}

@article{GSMFRSDIL24,
  title={Domain-Incremental Learning for Remote Sensing Semantic Segmentation With Multifeature Constraints in Graph Space},
  author={Huang, Wubiao and Ding, Mingtao and Deng, Fei},
  journal={IEEE Transactions on Geoscience and Remote Sensing},
  volume={62},
  pages={1--15},
  year={2024},
  publisher={IEEE},
  doi={10.1109/TGRS.2024.3481875}
}

@article{MiR25,
  title={Mitigating representation bias for class-incremental semantic segmentation of remote sensing images},
  author={Sun, Xiaoqian and Weng, Xingxing and Pang, Chao and Xia, Gui-Song},
  journal={Science China Information Sciences},
  volume={68},
  pages={182301},
  year={2025},
  doi={10.1007/s11432-024-4307-1},
  publisher={Science China Press}
}

@article{MiSSNet24,
  title={MiSSNet: Memory-inspired semantic segmentation augmentation network for class-incremental learning in remote sensing images},
  author={Xie, Jiajun and Pan, Bin and Xu, Xia and Shi, Zhenwei},
  journal={IEEE Transactions on Geoscience and Remote Sensing},
  volume={62},
  pages={1--13},
  year={2024},
  publisher={IEEE}
}

@inproceedings{AAKR25,
  author    = {Zhidong Yu and Xiaoman Liu and Jiajun Hu and Zhenbo Shi and Wei Yang},
  title     = {AAKR: Adversarial Attack-based Knowledge Retention for Continual Semantic Segmentation},
  booktitle = {Proceedings of the AAAI Conference on Artificial Intelligence (AAAI)},
  year      = {2025},
  volume    = {39},
  number    = {21}
}

@inproceedings{FR2Seg25,
  author    = {Cheng Xu and Weiwen Zhang and Hongrui Zhang and Xuemiao Xu and Huaidong Zhang and Jing Zou and Jing Qin},
  title     = {FR2Seg: Continual Segmentation Across Multiple Sites via Fourier Style Replay and Adaptive Consistency Regularization},
  booktitle = {Proceedings of the AAAI Conference on Artificial Intelligence (AAAI)},
  year      = {2025},
  volume    = {39},
  number    = {8},
  pages     = {8815--8823}
}

@inproceedings{SimCIS25,
  author    = {Yuchen Zhu and Cheng Shi and Dingyou Wang and Jiajin Tang and Zhengxuan Wei and Yu Wu and Guanbin Li and Sibei Yang},
  title     = {Rethinking Query-based Transformer for Continual Image Segmentation},
  booktitle = {Proceedings of the IEEE/CVF Conference on Computer Vision and Pattern Recognition (CVPR)},
  year      = {2025},
  month     = {June},
  url       = {https://openaccess.thecvf.com/content/CVPR2025/papers/Zhu_Rethinking_Query-based_Transformer_for_Continual_Image_Segmentation_CVPR_2025_paper.pdf}
}

@inproceedings{CoMBO25,
  author    = {Kai Fang and Anqi Zhang and Guangyu Gao and Jianbo Jiao and Chi Harold Liu and Yunchao Wei},
  title     = {CoMBO: Conflict Mitigation via Branched Optimization for Class Incremental Segmentation},
  booktitle = {Proceedings of the IEEE/CVF Conference on Computer Vision and Pattern Recognition (CVPR)},
  year      = {2025},
  month     = {June},
  url       = {https://openaccess.thecvf.com/content/CVPR2025/papers/Fang_CoMBO_Conflict_Mitigation_via_Branched_Optimization_for_Class_Incremental_Segmentation_CVPR_2025_paper.pdf}
}

@inproceedings{CoGaMiD25,
  title={Continual Gaussian Mixture Distribution Modeling for Class Incremental Semantic Segmentation},
  author={Zhu, Guilin and Wang, Runmin and Shao, Yuanjie and Yang, Wei dong and Sang, Nong and Gao, Changxin},
  booktitle={Advances in Neural Information Processing Systems},
  year={2025}
}

@article{LOVEDA,
  title={LoveDA: A remote sensing land-cover dataset for domain adaptive semantic segmentation},
  author={Wang, Junjue and Zheng, Zhuo and Ma, Ailong and Lu, Xiaoyan and Zhong, Yanfei},
  journal={arXiv preprint arXiv:2110.08733},
  year={2021}
}

@article{HIG22,
  title={Historical information-guided class-incremental semantic segmentation in remote sensing images},
  author={Rong, Xuee and Sun, Xian and Diao, Wenhui and Wang, Peijin and Yuan, Zhiqiang and Wang, Hongqi},
  journal={IEEE Transactions on Geoscience and Remote Sensing},
  volume={60},
  pages={1--18},
  year={2022},
  publisher={IEEE}
}

@article{GCSS,
  title={Automated high-resolution earth observation image interpretation: Outcome of the 2020 Gaofen challenge},
  author={Sun, Xian and Wang, Peijin and Yan, Zhiyuan and Diao, Wenhui and Lu, Xiaonan and Yang, Zhujun and Zhang, Yidan and Xiang, Deliang and Yan, Chen and Guo, Jie and others},
  journal={IEEE Journal of Selected Topics in Applied Earth Observations and Remote Sensing},
  volume={14},
  pages={8922--8940},
  year={2021},
  publisher={IEEE}
}

@inproceedings{ISAID,
  title={isaid: A large-scale dataset for instance segmentation in aerial images},
  author={Waqas Zamir, Syed and Arora, Aditya and Gupta, Akshita and Khan, Salman and Sun, Guolei and Shahbaz Khan, Fahad and Zhu, Fan and Shao, Ling and Xia, Gui-Song and Bai, Xiang},
  booktitle={Proceedings of the IEEE/CVF conference on computer vision and pattern recognition workshops},
  pages={28--37},
  year={2019}
}

@inproceedings{sgd,
  title={Large-scale machine learning with stochastic gradient descent},
  author={Bottou, L{\'e}on},
  booktitle={Proceedings of COMPSTAT'2010: 19th International Conference on Computational StatisticsParis France, August 22-27, 2010 Keynote, Invited and Contributed Papers},
  pages={177--186},
  year={2010},
  organization={Springer}
}

@article{ISPRS,
  title={ISPRS semantic labeling contest},
  author={Rottensteiner, Franz and Sohn, Gunho and Gerke, Markus and Wegner, Jan D},
  journal={ISPRS: Leopoldsh{\"o}he, Germany},
  volume={1},
  number={4},
  pages={4},
  year={2014}
}

@inproceedings{deepglobe18,
  title={Deepglobe 2018: A challenge to parse the earth through satellite images},
  author={Demir, Ilke and Koperski, Krzysztof and Lindenbaum, David and Pang, Guan and Huang, Jing and Basu, Saikat and Hughes, Forest and Tuia, Devis and Raskar, Ramesh},
  booktitle={Proceedings of the IEEE conference on computer vision and pattern recognition workshops},
  pages={172--181},
  year={2018}
}

@inproceedings{RBC22,
  title={Rbc: Rectifying the biased context in continual semantic segmentation},
  author={Zhao, Hanbin and Yang, Fengyu and Fu, Xinghe and Li, Xi},
  booktitle={European Conference on Computer Vision},
  pages={55--72},
  year={2022},
  organization={Springer}
}

@article{DKD22,
  title={Decomposed knowledge distillation for class-incremental semantic segmentation},
  author={Baek, Donghyeon and Oh, Youngmin and Lee, Sanghoon and Lee, Junghyup and Ham, Bumsub},
  journal={Advances in neural information processing systems},
  volume={35},
  pages={10380--10392},
  year={2022}
}

@article{UCD22,
  title={Uncertainty-aware contrastive distillation for incremental semantic segmentation},
  author={Yang, Guanglei and Fini, Enrico and Xu, Dan and Rota, Paolo and Ding, Mingli and Nabi, Moin and Alameda-Pineda, Xavier and Ricci, Elisa},
  journal={IEEE Transactions on Pattern Analysis and Machine Intelligence},
  volume={45},
  number={2},
  pages={2567--2581},
  year={2022},
  publisher={IEEE}
}

@article{wang2023samrs,
  title={Samrs: Scaling-up remote sensing segmentation dataset with segment anything model},
  author={Wang, Di and Zhang, Jing and Du, Bo and Xu, Minqiang and Liu, Lin and Tao, Dacheng and Zhang, Liangpei},
  journal={Advances in Neural Information Processing Systems},
  volume={36},
  pages={8815--8827},
  year={2023}
}

@inproceedings{Comformer23,
  title={Comformer: Continual learning in semantic and panoptic segmentation},
  author={Cermelli, Fabio and Cord, Matthieu and Douillard, Arthur},
  booktitle={Proceedings of the IEEE/CVF Conference on Computer Vision and Pattern Recognition},
  pages={3010--3020},
  year={2023}
}

@article{zhu2025replay,
  title={Replay master: Automatic sample selection and effective memory utilization for continual semantic segmentation},
  author={Zhu, Lanyun and Chen, Tianrun and Yin, Jianxiong and See, Simon and Soh, De Wen and Liu, Jun},
  journal={IEEE Transactions on Pattern Analysis and Machine Intelligence},
  year={2025},
  publisher={IEEE}
}

@inproceedings{RECALL21,
  title={Recall: Replay-based continual learning in semantic segmentation},
  author={Maracani, Andrea and Michieli, Umberto and Toldo, Marco and Zanuttigh, Pietro},
  booktitle={Proceedings of the IEEE/CVF international conference on computer vision},
  pages={7026--7035},
  year={2021}
}

@inproceedings{MiB,
  title={Modeling the background for incremental learning in semantic segmentation},
  author={Cermelli, Fabio and Mancini, Massimiliano and Bulo, Samuel Rota and Ricci, Elisa and Caputo, Barbara},
  booktitle={Proceedings of the IEEE/CVF conference on computer vision and pattern recognition},
  pages={9233--9242},
  year={2020}
}

@inproceedings{xiaoreversible,
  title={Reversible Primitive--Composition Alignment for Continual Vision--Language Learning},
  author={Xiao, Canran and Xu, Tianxiang and Jiang, Yiyang and Gao, Haoyu and Wu, Yuhan and others},
  booktitle={The Fourteenth International Conference on Learning Representations},
  year = {2026}
}

@article{tan2025one,
  title={One-shot adaptation for cross-domain semantic segmentation in remote sensing images},
  author={Tan, Jiaojiao and Zhang, Haiwei and Yao, Ning and Yu, Qiang},
  journal={Pattern Recognition},
  volume={162},
  pages={111390},
  year={2025},
  publisher={Elsevier}
}

@article{zhang2026towards,
  title={Towards Open-Vocabulary Semantic Segmentation for Remote Sensing Images},
  author={Zhang, Da and Zeng, Mingmin and Li, Xuelong},
  journal={Pattern Recognition},
  pages={113120},
  year={2026},
  publisher={Elsevier}
}

@article{wang2025lmfnet,
  title={LMFNet: Lightweight Multimodal Fusion Network for high-resolution remote sensing image segmentation},
  author={Wang, Tong and Chen, Guanzhou and Zhang, Xiaodong and Liu, Chenxi and Wang, Jiaqi and Tan, Xiaoliang and Zhou, Wenlin and He, Chanjuan},
  journal={Pattern Recognition},
  volume={164},
  pages={111579},
  year={2025},
  publisher={Elsevier}
}

@article{ye2025lightweight,
  title={A lightweight multilevel multiscale dual-path fusion network for remote sensing semantic segmentation},
  author={Ye, Hong and Chang, Jiaming and Wang, Ke and Jia, Zhaohong and Sun, Wei and Li, Zhiwei},
  journal={Pattern Recognition},
  pages={112483},
  year={2025},
  publisher={Elsevier}
}

\clearpage

\appendix

\section{Supplementary Experimental Setup}
\label{app:exp-setup}

\subsection{Datasets and Incremental Protocols}
\label{app:datasets}

\vspace{1mm}
\noindent\textbf{DeepGlobe Land Cover.}
We use the DeepGlobe Land Cover dataset~\citep{deepglobe18} with 7 semantic classes (urban, agriculture, rangeland, forest, water, barren, and unknown/background).
Following standard practice, original tiles of size $2448\times2448$ are cut into non-overlapping patches of $512\times512$ pixels.
We use the official training split for incremental training and the official validation split for evaluation.
For class-incremental segmentation, we adopt a 4-step protocol:
(i) step~0 (base) contains roughly half of the classes (3 or 4 classes, depending on the split), and
(ii) steps~1--3 sequentially introduce the remaining classes in groups of size 1--2.
We keep the total number of classes and the per-step class cardinalities consistent with~\citet{GSMFRSDIL24,MiR25}; the exact assignment of classes to steps is fixed across all methods and is provided in the released code.

\vspace{1mm}
\noindent\textbf{ISPRS Vaihingen.}
The ISPRS Vaihingen dataset~\citep{ISPRS} consists of 33 high-resolution tiles with near-infrared, red and green channels and pixel-wise labels for 6 classes
(impervious surfaces, buildings, low vegetation, trees, cars, and clutter/background).
We follow~\citet{GSMFRSDIL24,D2LS25,MiR25} for pre-processing and train/validation splits:
16 images are used for training, 5 for validation, and the remaining 12 for testing.
All tiles are cropped into patches of $512\times512$ pixels with a stride of $256$.
We use the same 4-step class-incremental protocol as in~\citet{GSMFRSDIL24}, where the base step contains 3 classes and each incremental step adds 1 new foreground class.

\vspace{1mm}
\noindent\textbf{ISPRS Potsdam.}
The ISPRS Potsdam dataset~\citep{ISPRS} provides 38 tiles with RGB+IR channels and 6 semantic classes.
We follow the common protocol in~\citet{D2LS25,GSMFRSDIL24}: 24 images for training, 7 for validation, and 7 for testing.
We crop patches of size $512\times512$ with stride $256$.
The CISS protocol mirrors that of Vaihingen, with identical step-wise class groupings and number of incremental steps.

\vspace{1mm}
\noindent\textbf{iSAID.}
The iSAID dataset~\citep{ISAID} is a large-scale aerial image benchmark with high-resolution images and dense annotations. We adopt the semantic segmentation version used in~\citet{HIG22,STCLDRNet24}, which aggregates instance-level masks into semantic labels.
We use the official training/validation split and crop images into $512\times512$ patches with stride $256$.
Since iSAID is used to evaluate large-scale robustness rather than fine-grained incremental behavior, we employ the same 4-step CISS protocol as for DeepGlobe (same number of classes per step), while keeping the total class set and label mapping consistent with the official ontology.

\vspace{1mm}
\noindent\textbf{GCSS.}
The GCSS dataset~\citep{GCSS} consists of large-scale aerial imagery with densely annotated semantic labels.
We follow~\citet{HIG22,STCLDRNet24} for pre-processing and train/validation splits and adopt the same patch size ($512\times512$) and stride ($256$).
The incremental protocol again mirrors that of DeepGlobe and iSAID, with a 4-step schedule and the same relative class proportions per step.

\vspace{1mm}
\noindent\textbf{LoveDA (domain-incremental).}
LoveDA~\citep{LOVEDA} contains 5987 land-cover images annotated with 7 classes and split into urban and rural domains.
To study domain-/temporal shift, we follow the domain-incremental protocol introduced in GSMF-RS-DIL~\citep{GSMFRSDIL24}:
(i) step~0 trains on the urban subset,
(ii) step~1 trains on the rural subset, and
(iii) step~2 revisits a mixed domain containing both (simulating a later acquisition period with different domain proportions and appearance).
At each step we use the official training set for that domain configuration and evaluate on the full validation set.
We keep the class set fixed across steps, only the domain distribution changes over time.

\subsection{Pre-processing and Data Augmentation}
\label{app:preproc}

For all datasets, RGB channels are normalized with ImageNet-1K statistics (mean $[0.485, 0.456, 0.406]$, standard deviation $[0.229, 0.224, 0.225]$).
For Vaihingen and Potsdam, we map the NIR channel into the red channel following~\citet{D2LS25} when using ImageNet-pretrained encoders; alternatively, we also experimented with a $1\times1$ convolution-based channel adapter and observed negligible differences.

We use the following augmentations at training time:
\begin{itemize}
  \item Random horizontal and vertical flips with probability 0.5 each;
  \item Random scaling with factor uniformly sampled from $[0.5, 2.0]$, followed by random cropping to the target size $512\times512$;
  \item Random rotation by multiples of $90^\circ$;
  \item Color jitter on brightness, contrast, and saturation with maximum deviation $0.2$.
\end{itemize}

At evaluation time, we use a single center crop (or uniform tiling without overlap for large images) and do not apply test-time augmentation, in line with~\citet{D2LS25,GSMFRSDIL24}.

\subsection{Backbone and Segmentation Head}
\label{app:backbone}

All methods, including baselines and ProtoFlow, share the same segmentation backbone:

\begin{itemize}
  \item \textbf{Encoder:} HRNet-W48 pretrained on ImageNet-1K. We use an output stride of 4 and maintain multi-resolution feature fusion as in the original design.
  \item \textbf{Decoder:} We adopt the lightweight segmentation head of D2LS~\citep{D2LS25}, which consists of a $1\times1$ convolution to unify channel dimensions across branches, followed by two $3\times3$ convolutional layers with batch normalization and ReLU, and a final $1\times1$ classifier to output logits for $|\mathcal{C}^{\le t}|$ classes at step $t$. The logits are bilinearly upsampled to the original image resolution.
\end{itemize}

For a fair comparison, we fix the backbone and head architecture for all methods. Only the continual-learning components (distillation, memory, prototype regularization, etc.) differ.

\subsection{Implementation of ProtoFlow}
\label{app:protoflow-impl}
\vspace{1mm}
\noindent\textbf{Prototype estimator and bank.}
At each incremental step $t$, we compute class prototypes using Eq.~\eqref{eq:prototype_definition}.
In practice, we maintain a prototype bank $\{\mu_c^{(t)}\}_{c\in\mathcal{C}^{\le t}}$ updated with an exponential moving average (EMA) to reduce noise:
\begin{equation}
  \mu_c^{(t)} \leftarrow
  (1-\alpha)\,\mu_c^{(t)} + \alpha \,\tilde{\mu}_c^{(t)},
\end{equation}
where $\tilde{\mu}_c^{(t)}$ is the instantaneous mean feature computed on the current mini-batch (including memory samples if used), and $\alpha=0.1$.
We update the prototype bank once per training iteration.
Before computing $\mathcal{L}_{\text{flow}}$, $\mathcal{L}_{\text{curve}}$ and $\mathcal{L}_{\text{sep}}$, all prototypes are $L_2$-normalized, as described in the main text.

\vspace{1mm}
\noindent\textbf{ProtoFlow Field $F_{\phi}$.}
We implement $F_{\phi}$ as a two-layer MLP:
\begin{align*}
  h &= \mathrm{ReLU}\big( W_1 [\mu; e(\tau)] + b_1 \big), \\
  F_{\phi}(\mu,\tau) &= W_2 h + b_2,
\end{align*}
where $W_1 \in \mathbb{R}^{(d+d_\tau)\times 256}$, $W_2 \in \mathbb{R}^{256\times d}$, $b_1,b_2$ are biases, and $e(\tau)\in\mathbb{R}^{d_\tau}$ is a time encoding.
We set $d_\tau=16$ and use a sinusoidal positional encoding of the scalar time index normalized to $[0,1]$:
\[
  e(\tau_t)_k
  = \begin{cases}
      \sin\big( \omega_k \tilde{\tau}_t \big), & k \text{ odd},\\
      \cos\big( \omega_k \tilde{\tau}_t \big), & k \text{ even},
    \end{cases}
\]
where $\tilde{\tau}_t = (t - t_c)/(T_c - t_c)$ is the normalized step index for class $c$ and $\{\omega_k\}$ is a geometric frequency sequence.
Weights of $F_{\phi}$ are initialized with Kaiming initialization.
For the \emph{w/o Time conditioning} ablation, we simply drop $e(\tau)$ and feed $\mu$ through the same MLP.

\vspace{1mm}
\noindent\textbf{Loss weights and hyperparameters.}
For all datasets we use the same coefficient values unless otherwise noted:
\begin{itemize}
  \item Distillation temperature $T_{\text{dist}} = 2.0$;
  \item Distillation weight $\lambda_{\text{dist}} = 1.0$;
  \item Flow consistency weight $\lambda_{\text{flow}} = 1.0$;
  \item Curvature weight $\lambda_{\text{curve}} = 0.5$;
  \item Separation weight $\lambda_{\text{sep}} = 0.1$;
  \item Margin $m=0.5$ in Eq.~\eqref{eq:sep_step}.
\end{itemize}
We selected $(\lambda_{\text{curve}},\lambda_{\text{sep}})$ by a small grid search on DeepGlobe and kept them fixed for all other datasets.
For the \emph{w/o Curvature} and \emph{w/o Separation} ablations, we set the corresponding weights to zero while keeping all other hyperparameters unchanged.

\vspace{1mm}
\noindent\textbf{Memory buffer.}
For methods that rely on replay (MiB, APR, CoGaMiD, GSMF-RS-DIL, STCL-DRNet, MiR) and for ProtoFlow, we use a fixed memory budget of 20 images per class following MiB~\citep{MiB}.
We adopt herding selection on DeepGlobe, Vaihingen, and Potsdam, and random selection on iSAID, GCSS, and LoveDA for scalability.
The same memory size and selection strategy are used for ProtoFlow and baselines.

\subsection{Training Schedules and Optimization}
\label{app:training}

\paragraph{Optimizer and learning rate schedule.}
All models are trained with stochastic gradient descent (SGD) with momentum $0.9$ and weight decay $10^{-4}$.
We use a polynomial learning rate schedule with power $0.9$:
\begin{equation}
  \eta(i)
  = \eta_0 \left(1 - \frac{i}{I_{\text{max}}}\right)^{0.9},
\end{equation}
where $i$ is the current iteration and $I_{\text{max}}$ is the maximum number of iterations per step.
We use:
\begin{itemize}
  \item $\eta_0 = 0.01$, $I_{\text{max}} = 40\text{k}$ for DeepGlobe and LoveDA;
  \item $\eta_0 = 0.01$, $I_{\text{max}} = 30\text{k}$ for Vaihingen and Potsdam;
  \item $\eta_0 = 0.005$, $I_{\text{max}} = 60\text{k}$ for iSAID and GCSS (larger datasets).
\end{itemize}
For all experiments, we use a warm-up phase during the first $1\text{k}$ iterations where the learning rate increases linearly from $10^{-4}$ to $\eta_0$.

\vspace{1mm}
\noindent\textbf{Batch size and gradient clipping.}
Unless otherwise stated, we use a batch size of 8 on 4$\times$V100 GPUs (2 images per GPU) for DeepGlobe, Vaihingen, Potsdam, and LoveDA.
For iSAID and GCSS, we use a batch size of 4 due to memory constraints.
We apply gradient clipping with $\|\nabla\|_2 \le 1.0$ to stabilize training, especially when updating the ProtoFlow Field jointly with the encoder.

\vspace{1mm}
\noindent\textbf{Teacher model and distillation.}
At each incremental step $t>0$, we keep a copy of the model parameters $\theta^{\text{old}}$ from the end of step $t-1$ as the teacher.
The distillation loss in Eq.~\eqref{eq:dist_loss} is applied only on classes learned up to step $t-1$; newly introduced classes at step $t$ are not distilled.
We use logits from the penultimate feature map resolution (after the decoder but before final upsampling), and apply the softmax and KL divergence on every pixel in the mini-batch.

\subsection{Baseline Implementations and Hyperparameters}
\label{app:baselines}

We re-implement all baselines (MiB, DKD, LAG, APR, CoMBO, CoGaMiD, HIGCISS, GSMF-RS-DIL, MiSSNet, STCL-DRNet, MiR) in a unified codebase, using the same backbone, decoder, augmentation, and training schedules described above.
Whenever possible, we start from the official implementations and adapt them to HRNet-W48 while preserving their default hyperparameters (e.g., distillation weights, memory budget) and optimization tricks (e.g., class-balanced loss, auxiliary heads).
When this leads to instability or inferior performance compared to the numbers reported in their papers, we perform a small hyperparameter sweep on the validation set (learning rate $\eta_0 \in \{0.005, 0.01\}$, distillation weight $\lambda_{\text{dist}}\in\{0.5,1.0,2.0\}$) and report the best setting.

\subsection{Details of Ablation Variants}
\label{app:ablation-variants}

We detail the exact implementation of the ablation variants reported in Table~\ref{tab:ablation-protoflow}. 

\vspace{1mm}
\noindent\textbf{\ding{182} w/o ProtoFlow Field.}
This variant ablates the explicit vector field $F_{\phi}$ and the associated flow consistency loss $\mathcal{L}_{\text{flow}}^{(t)}$.
Concretely:

\begin{itemize}
  \item We remove the MLP $F_{\phi}$ entirely and do not predict prototype velocities $\mathbf{v}_c^{(k)} = F_{\phi}(\mu_c^{(k)},\tau_k)$.
  \item The forward-Euler prediction in Eq.~\eqref{eq:proto_euler},
  \begin{equation}
    \widehat{\mu}_c^{(k+1)} 
    = \mu_c^{(k)} + \Delta\tau_k F_{\phi}(\mu_c^{(k)},\tau_k),
  \end{equation}
  is not computed. Instead, only the empirical prototypes $\mu_c^{(k)}$ estimated from Eq.~\eqref{eq:prototype_definition} are used.
  \item The flow consistency loss in Eq.~\eqref{eq:flow_loss_step} is dropped by setting
  $
    \lambda_{\text{flow}} = 0,
    \qquad
    \mathcal{L}_{\text{flow}}^{(t)} \equiv 0.$
\end{itemize}

The curvature and separation terms are still computed on the observed prototype trajectories:
$
  \mathcal{L}_{\text{curve}}^{(t)} \text{ and } \mathcal{L}_{\text{sep}}^{(t)}
  \quad \text{are unchanged.}
$
The total loss thus reduces to
\begin{equation}
  \mathcal{L}^{(t)}_{\text{w/o-Field}} 
  = \mathcal{L}_{\text{seg}}^{(t)}
    + \lambda_{\text{dist}} \mathcal{L}_{\text{dist}}^{(t)}
    + \lambda_{\text{curve}} \mathcal{L}_{\text{curve}}^{(t)}
    + \lambda_{\text{sep}} \mathcal{L}_{\text{sep}}^{(t)}.
\end{equation}
This variant isolates the effect of explicitly modeling prototype dynamics via $F_{\phi}$ versus only regularizing the geometry of the empirical trajectory.

\vspace{1mm}
\noindent\textbf{\ding{183} w/o Curvature regularizer.}
This variant disables the curvature regularization that enforces low second-order differences of prototype trajectories.

\begin{itemize}
  \item We retain the ProtoFlow Field $F_{\phi}$ and the flow consistency loss $\mathcal{L}_{\text{flow}}^{(t)}$.
  \item The discrete curvature vectors
  \(
    \kappa_c^{(t)} 
    = \mu_c^{(t+1)} - 2\mu_c^{(t)} + \mu_c^{(t-1)}
  \)
  are no longer used in the loss.
  \item We set
  $
    \lambda_{\text{curve}} = 0,
    \qquad
    \mathcal{L}_{\text{curve}}^{(t)} \equiv 0.
  $
\end{itemize}

The total loss becomes
\begin{equation}
  \mathcal{L}^{(t)}_{\text{w/o-Curve}} 
  = \mathcal{L}_{\text{seg}}^{(t)}
    + \lambda_{\text{dist}} \mathcal{L}_{\text{dist}}^{(t)}
    + \lambda_{\text{flow}} \mathcal{L}_{\text{flow}}^{(t)}
    + \lambda_{\text{sep}} \mathcal{L}_{\text{sep}}^{(t)}.
\end{equation}
This configuration tests whether simply predicting prototype motion with $F_{\phi}$ (first-order dynamics) is sufficient, without penalizing high curvature in the induced trajectories.

\vspace{1mm}
\noindent\textbf{\ding{184} w/o Separation regularizer.}
This variant ablates the margin-based separation term that prevents prototypes of different classes from collapsing.

\begin{itemize}
  \item The ProtoFlow Field and flow consistency loss $\mathcal{L}_{\text{flow}}^{(t)}$ remain unchanged.
  \item The discrete curvature loss $\mathcal{L}_{\text{curve}}^{(t)}$ is kept.
  \item The separation loss
  \begin{equation}
    \mathcal{L}_{\text{sep}}^{(t)}
    = \sum_{\substack{c,c'\in \mathcal{C}^{\le t} \\ c'\neq c}}
      \left[
        m - \big\|\mu_c^{(t)} - \mu_{c'}^{(t)}\big\|_2
      \right]_+^2
  \end{equation}
  is disabled by setting
  $
    \lambda_{\text{sep}} = 0,
    \qquad
    \mathcal{L}_{\text{sep}}^{(t)} \equiv 0.
  $
\end{itemize}

The resulting loss is
\begin{equation}
  \mathcal{L}^{(t)}_{\text{w/o-Sep}} 
  = \mathcal{L}_{\text{seg}}^{(t)}
    + \lambda_{\text{dist}} \mathcal{L}_{\text{dist}}^{(t)}
    + \lambda_{\text{flow}} \mathcal{L}_{\text{flow}}^{(t)}
    + \lambda_{\text{curve}} \mathcal{L}_{\text{curve}}^{(t)}.
\end{equation}
This variant isolates the benefit of enforcing class-wise margins along the prototype flow, while still constraining curvature.

\vspace{1mm}
\noindent\textbf{\ding{185} w/o Time conditioning.}
This variant tests whether the ProtoFlow Field genuinely exploits temporal information, or simply learns a generic mapping that ignores time.

\begin{itemize}
  \item We keep the overall ProtoFlow pipeline identical, including $\mathcal{L}_{\text{flow}}^{(t)}$, $\mathcal{L}_{\text{curve}}^{(t)}$, and $\mathcal{L}_{\text{sep}}^{(t)}$.
  \item The only change is in the parameterization of $F_{\phi}$:
  instead of $F_{\phi} : \mathbb{R}^d \times \mathbb{R} \to \mathbb{R}^d$ with input $(\mu, \tau)$, we drop the time encoding and use
  $
    F_{\phi}^{\text{no-time}} : \mathbb{R}^d \to \mathbb{R}^d,
    \quad
    F_{\phi}^{\text{no-time}}(\mu)
    = W_2 \,\mathrm{ReLU}(W_1 \mu + b_1) + b_2.
  $
  \item All architectural hyperparameters (hidden dimension, initialization, etc.) and loss weights are kept the same as in the full model.
\end{itemize}

The objective remains
\begin{equation}
  \begin{split}
    \mathcal{L}^{(t)}_{\text{w/o-Time}} 
    = \mathcal{L}_{\text{seg}}^{(t)}
      + \lambda_{\text{dist}} \mathcal{L}_{\text{dist}}^{(t)}
      + \lambda_{\text{flow}} \mathcal{L}_{\text{flow}}^{(t)} \\
    \quad + \lambda_{\text{curve}} \mathcal{L}_{\text{curve}}^{(t)}
      + \lambda_{\text{sep}} \mathcal{L}_{\text{sep}}^{(t)},
  \end{split}
\end{equation}
but the flow field cannot condition its predictions on acquisition time.
This ablation directly probes the contribution of time-aware modeling in $F_{\phi}$.

\vspace{1mm}
\noindent\textbf{\ding{186} w/o Prototype norm.}
Finally, this variant removes the $L_2$ normalization of prototypes before computing prototype-based losses.

\begin{itemize}
  \item In the full model, we normalize all prototypes before using them in $\mathcal{L}_{\text{flow}}^{(t)}$, $\mathcal{L}_{\text{curve}}^{(t)}$, and $\mathcal{L}_{\text{sep}}^{(t)}$:
  \begin{equation}
    \tilde{\mu}_c^{(k)}
    = \frac{\mu_c^{(k)}}{\|\mu_c^{(k)}\|_2},
    \qquad
    \tilde{\widehat{\mu}}_c^{(k)}
    = \frac{\widehat{\mu}_c^{(k)}}{\|\widehat{\mu}_c^{(k)}\|_2}.
  \end{equation}
  All distances and differences in prototype space are computed using $\tilde{\mu}$ and $\tilde{\widehat{\mu}}$.
  \item In the w/o Prototype norm. variant, we instead use the raw prototypes without normalization:
\begin{align}
  \mathcal{L}_{\text{flow}}^{(t)}
  &= \sum_{c \in \mathcal{C}^{\le t-1}}
     \big\| \widehat{\mu}_c^{(t)} - \mu_c^{(t)} \big\|_2^2, \\
  \mathcal{L}_{\text{curve}}^{(t)}
  &= \sum_{c \in \mathcal{C}^{\le t}}
     \mathbf{1}[t \ge t_c+1] \nonumber \\
  &\quad \cdot \big\|
        \mu_c^{(t+1)} - 2\mu_c^{(t)} + \mu_c^{(t-1)}
      \big\|_2^2, \\
  \mathcal{L}_{\text{sep}}^{(t)}
  &= \sum_{\substack{c,c'\in \mathcal{C}^{\le t}\\c'\neq c}}
     \left[
       m - \big\| \mu_c^{(t)} - \mu_{c'}^{(t)} \big\|_2
     \right]_+^2.
\end{align}
  \item All other components of the pipeline (memory, encoder, $F_{\phi}$) are identical to the full ProtoFlow.
\end{itemize}
This variant reveals whether constraining prototype dynamics on the unit sphere yields better stability than operating in the unconstrained Euclidean feature space.

\section{Theoretical Analysis}
\label{sec:theory}

We now give a first-step theoretical analysis of how prototype trajectory geometry controls forgetting and dynamic regret in a stylized setting.
The goal is to make explicit the link between (i)~the curvature of prototype trajectories and (ii)~the growth of classification error over time.

\subsection{A Stylized Prototype-Gaussian Model}

We focus on a single foreground class $c$ whose prototype drifts over time,
while all other class prototypes remain fixed.\footnote{
Extending the analysis to multiple drifting classes is possible but leads
to heavier notation without adding new insight. Here we isolate the
effect of prototype drift for one class.
}
This captures the typical situation in continual learning where an old class
can be forgotten because its representation moves in feature space
under non-stationary training.

\paragraph{Time index and prototypes.}
Fix a class $c$ and a horizon $T \ge 2$.
For simplicity we re-index time so that $c$ first appears at step $t=0$.
Let
\begin{equation}
  \mu^{(t)} \in \mathbb{R}^d, \quad t = 0,1,\dots,T,
\end{equation}
denote the prototype of class $c$ at step $t$.
All other classes $c' \neq c$ are assumed to have fixed prototypes
$\mu_{c'} \in \mathbb{R}^d$ independent of $t$.

\paragraph{Gaussian feature model.}
We adopt the following standard parametric model for the encoder features.

\begin{assumption}
\label{assump:gaussian}
For each step $t$ and class $y \in \{c\} \cup \mathcal{C}_{\text{other}}$,
the encoder feature $z \in \mathbb{R}^d$ of a pixel sampled from class $y$
at step $t$ satisfies
\begin{equation}
  \begin{split}
    z \,\big|\, y=c, t \sim \mathcal{N}\big(\mu^{(t)}, \sigma^2 I_d\big), \\
    z \,\big|\, y=c', t \sim \mathcal{N}\big(\mu_{c'}, \sigma^2 I_d\big),
  \end{split}
\end{equation}
for all $c' \neq c$, where $\sigma > 0$ is fixed and $I_d$ is the
$d\times d$ identity matrix.
\end{assumption}

\paragraph{Nearest-prototype classifier and margin.}
At step $t$ the classifier $h_t$ is the nearest-prototype rule in feature space:
\begin{equation}
  h_t(z) = \argmin_{y \in \{c\} \cup \mathcal{C}_{\text{other}}}
  \big\| z - \mu_y^{(t)} \big\|,
\end{equation}
where we set $\mu_c^{(t)} := \mu^{(t)}$ and $\mu_{c'}^{(t)} := \mu_{c'}$
for all $c' \neq c$.
We write $K = 1 + |\mathcal{C}_{\text{other}}|$ for the number of classes.

The margin of class $c$ at step $t$ is
\begin{equation}
  \gamma^{(t)} 
  := \min_{c' \neq c} 
      \big\| \mu^{(t)} - \mu_{c'} \big\|.
  \label{eq:margin_def}
\end{equation}
We assume that the margin never collapses completely.

\begin{assumption}[Uniform separation]
\label{assump:margin}
There exists $\gamma_{\min}>0$ such that $\gamma^{(t)} \ge \gamma_{\min}$
for all $t=0,\dots,T$.
\end{assumption}

\paragraph{Per-class risk and forgetting.}
We focus on the population misclassification probability for class $c$ at
step $t$:
\begin{equation}
  R^{(t)} 
  := \Pr\big( h_t(z) \neq c \,\big|\, y=c, t \big).
  \label{eq:risk_def}
\end{equation}
The final forgetting of class $c$ at horizon $T$ is defined as
\begin{equation}
  F_c(T)
  := R^{(T)} - \min_{0 \le s \le T} R^{(s)}.
  \label{eq:forgetting_def}
\end{equation}
Equivalently, let $t^\star \in \argmin_{0 \le s \le T} R^{(s)}$ be a step
where class $c$ is best remembered, then
$F_c(T) = R^{(T)} - R^{(t^\star)}$.

We also define a per-class dynamic regret with respect to the best
static representation for class $c$:

\begin{definition}
\label{def:dyn_regret}
Let $R_c^\star := \min_{0 \le s \le T} R^{(s)}$ and define
\begin{equation}
  \mathrm{Reg}_c(T)
  := \sum_{t=0}^T \big( R^{(t)} - R_c^\star \big).
  \label{eq:dyn_regret_def}
\end{equation}
\end{definition}

By construction $R^{(t)} \ge R_c^\star$ for all $t$, hence
$\mathrm{Reg}_c(T)$ measures the cumulative regret incurred by
the time-varying prototype trajectory relative to the best
fixed prototype for class $c$ in hindsight.
Note that $F_c(T)$ upper-bounds the per-step excess term in
\eqref{eq:dyn_regret_def}, so
\begin{equation}
  \mathrm{Reg}_c(T)
  \;\le\; (T+1) \, F_c(T).
  \label{eq:regret_forgetting_relation}
\end{equation}

\subsection{From Margin to Risk: A Gaussian Tail Bound}

We first relate the per-class risk $R^{(t)}$ to the margin $\gamma^{(t)}$.

\begin{lemma}[Gaussian margin bound]
\label{lem:gauss_margin_bound}
Under Assumption~\ref{assump:gaussian}, for each step $t$,
the misclassification probability of class $c$ satisfies
\begin{equation}
  R^{(t)}
  \;\le\;
  g\big(\gamma^{(t)}\big)
  := (K-1) \exp\!\left( 
      - \frac{ \big(\gamma^{(t)}\big)^2 }{ 8 \sigma^2 }
    \right).
  \label{eq:gauss_margin_bound}
\end{equation}
\end{lemma}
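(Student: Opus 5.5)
The plan is a union bound over the $K-1$ competing classes, followed by a one-dimensional Gaussian tail bound for each pairwise comparison. Fix $t$ and draw $z \sim \mathcal{N}(\mu^{(t)},\sigma^2 I_d)$. The event $\{h_t(z)\neq c\}$ implies that some competitor is at least as close to $z$ as $\mu^{(t)}$. (Ties in the $\argmin$ have probability zero, since the Gaussian is continuous, so they can be ignored.) Hence
\[
R^{(t)} \le \sum_{c'\neq c} \Pr\big(\|z-\mu_{c'}\| \le \|z-\mu^{(t)}\|\big),
\]
a sum of $K-1$ terms. It then suffices to show that each term is at most $\exp\!\big(-\gamma^{(t)2}/(8\sigma^2)\big)$.

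For a single competitor $c'$, write $z=\mu^{(t)}+\sigma\xi$ with $\xi\sim\mathcal{N}(0,I_d)$, and set $\Delta=\mu_{c'}-\mu^{(t)}$. Expanding the squared distances gives
\[
\|z-\mu_{c'}\|^2-\|z-\mu^{(t)}\|^2=\|\Delta\|^2-2\sigma\langle\xi,\Delta\rangle .
\]
So the pairwise error event is $\{\langle \xi, \Delta/\|\Delta\|\rangle \ge \|\Delta\|/(2\sigma)\}$. The left-hand side is a standard one-dimensional Gaussian $g$, so the probability equals $\Pr(g\ge \|\Delta\|/(2\sigma))$. This is exactly $\bar\Phi(\|\Delta\|/(2\sigma))$, the geometric half-distance to the bisecting hyperplane measured in units of $\sigma$.

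Next I apply the Chernoff tail bound $\Pr(g\ge a)\le e^{-a^2/2}$ for $a\ge0$. With $a=\|\Delta\|/(2\sigma)$ this gives $\exp(-\|\Delta\|^2/(8\sigma^2))$. By the definition of the margin in \eqref{eq:margin_def}, $\|\Delta\|\ge\gamma^{(t)}$ for every $c'\neq c$, and the bound is decreasing in $\|\Delta\|$. Therefore each term is at most $\exp(-\gamma^{(t)2}/(8\sigma^2))$. Summing the $K-1$ terms yields \eqref{eq:gauss_margin_bound}.

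There is no real obstacle, and Assumption~\ref{assump:margin} is not needed here. The only points requiring care are these:
\begin{itemize}
\item Get the constant right: half the distance, squared, over $2\sigma^2$ gives $8\sigma^2$.
\item Note that the features of the other classes play no role; only the distribution of class-$c$ pixels matters.
\item Justify the Chernoff bound without the extra factor $1/2$. For example, optimize $e^{-\lambda a}\mathbb{E}e^{\lambda g}=e^{-\lambda a+\lambda^2/2}$ at $\lambda=a$.
\end{itemize}
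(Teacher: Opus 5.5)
Your proposal is correct and takes essentially the same route as the paper's proof: project $z-\mu^{(t)}$ onto the unit direction toward $\mu_{c'}$, get a one-dimensional Gaussian tail at $\|\mu_{c'}-\mu^{(t)}\|/(2\sigma)$, apply $\Pr(Z>a)\le e^{-a^2/2}$, use $\|\mu_{c'}-\mu^{(t)}\|\ge\gamma^{(t)}$, and take a union bound over the $K-1$ competitors. Your treatment of the event $\{h_t(z)\neq c\}$ as contained in the union of the pairwise events is slightly more precise than the paper's statement that misclassification as $c'$ holds ``iff'' $z$ lies on the $c'$ side of $H_{c'}$.
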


\begin{proof}
Fix a step $t$ and abbreviate $\mu := \mu^{(t)}$.
For each competitor class $c' \neq c$,
let $\delta_{c'} := \| \mu_{c'} - \mu \|$ and define
the unit direction
\begin{equation}
  u_{c'} := \frac{ \mu_{c'} - \mu }{ \delta_{c'} }.
\end{equation}
The decision boundary between $c$ and $c'$ under the nearest-prototype rule
is the hyperplane
\begin{equation}
  H_{c'} 
  = \big\{ z \in \mathbb{R}^d :
      \|z - \mu\| = \|z - \mu_{c'}\|
    \big\},
\end{equation}
which can be rewritten as
\begin{equation}
  \big\langle z - \tfrac12(\mu + \mu_{c'}), u_{c'} \big\rangle = 0.
\end{equation}
Under Assumption~\ref{assump:gaussian},
for a sample $z$ from class $c$ at step $t$ we have
$z \sim \mathcal{N}(\mu, \sigma^2 I_d)$.
The scalar projection onto $u_{c'}$ is
\begin{equation}
  s_{c'} 
  := \left\langle z - \tfrac12(\mu + \mu_{c'}), u_{c'} \right\rangle
  = \left\langle z - \mu, u_{c'} \right\rangle - \tfrac12 \delta_{c'}.
\end{equation}
Since $z-\mu \sim \mathcal{N}(0, \sigma^2 I_d)$ and $u_{c'}$ is unit-norm,
we have
\begin{equation}
  \left\langle z-\mu, u_{c'} \right\rangle \sim \mathcal{N}(0, \sigma^2),
\end{equation}
so
\begin{equation}
  s_{c'} \sim \mathcal{N}\Big(-\tfrac12 \delta_{c'},\, \sigma^2\Big).
\end{equation}
Now $z$ is misclassified as $c'$ iff $z$ lies on the $c'$ side of $H_{c'}$,
i.e., $s_{c'} > 0$. Therefore
\begin{equation}
  \begin{split}
    \Pr\big( h_t(z)=c' \,\big|\, y=c, t \big)
    = \Pr\big( s_{c'} > 0 \big) \\
    = \Pr\!\left(
          \mathcal{N}\Big(-\tfrac12 \delta_{c'},\, \sigma^2\Big) > 0
        \right).
  \end{split}
\end{equation}
Let $Z \sim \mathcal{N}(0,1)$. Then
\begin{equation}
  \Pr\big( s_{c'} > 0 \big)
  = \Pr\!\left(
      Z > \frac{ \delta_{c'} }{ 2 \sigma }
    \right).
\end{equation}
Using the standard Gaussian tail bound
$\Pr(Z>a) \le \exp(-a^2/2)$ for all $a>0$, we obtain
\begin{equation}
  \Pr\big( h_t(z)=c' \,\big|\, y=c, t \big)
  \le \exp\!\left(
    - \frac{ \delta_{c'}^2 }{ 8 \sigma^2 }
  \right).
\end{equation}
By the union bound over the $K-1$ competing classes,
\begin{align}
  R^{(t)}
  &= \Pr\big( h_t(z) \neq c \,\big|\, y=c, t \big)                           \\
  &\le \sum_{c' \neq c}
      \Pr\big( h_t(z)=c' \,\big|\, y=c, t \big)                              \\
  &\le \sum_{c' \neq c}
      \exp\!\left(
        - \frac{ \delta_{c'}^2 }{ 8 \sigma^2 }
      \right).
\end{align}
Since $\delta_{c'} \ge \gamma^{(t)}$ for all $c' \neq c$, we have
\begin{equation}
  R^{(t)}
  \le (K-1)
      \exp\!\left(
        - \frac{ \big(\gamma^{(t)}\big)^2 }{ 8 \sigma^2 }
      \right)
  = g\big(\gamma^{(t)}\big),
\end{equation}
which is exactly \eqref{eq:gauss_margin_bound}. \hfill$\blacksquare$
\end{proof}

Lemma~\ref{lem:gauss_margin_bound} shows that the risk $R^{(t)}$ is
controlled by the margin $\gamma^{(t)}$.
We next show that $g(\cdot)$ is Lipschitz on the margin range of interest.

\begin{lemma}[Lipschitz continuity of $g$]
\label{lem:g_lipschitz}
Assume $\gamma^{(t)} \ge \gamma_{\min} > 0$ for all $t$.
Define
\begin{equation}
  L_g
  :=
  \begin{cases}
    \displaystyle
    (K-1)\,\frac{1}{2\sigma}\, e^{-1/2},
    & \text{if } \gamma_{\min} \le 2\sigma, \\[1ex]
    \displaystyle
    (K-1)\,\frac{\gamma_{\min}}{4\sigma^2}\,
      \exp\!\Big(-\tfrac{\gamma_{\min}^2}{8\sigma^2}\Big),
    & \text{if } \gamma_{\min} > 2\sigma.
  \end{cases}
  \label{eq:Lg_def}
\end{equation}
Then for all $\gamma_1,\gamma_2 \ge \gamma_{\min}$,
\begin{equation}
  \big| g(\gamma_2) - g(\gamma_1) \big|
  \le L_g \, |\gamma_2 - \gamma_1|.
  \label{eq:g_lipschitz}
\end{equation}
\end{lemma}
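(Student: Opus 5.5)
The plan is a direct calculus argument: bound $|g'|$ uniformly on the half-line $[\gamma_{\min},\infty)$ and then apply the mean value theorem. The function $g(\gamma) = (K-1)\exp(-\gamma^2/(8\sigma^2))$ from Lemma~\ref{lem:gauss_margin_bound} is smooth on $\mathbb{R}$, so for any $\gamma_1,\gamma_2 \ge \gamma_{\min}$ there is a $\xi$ between them with
\begin{equation*}
  g(\gamma_2)-g(\gamma_1) = g'(\xi)\,(\gamma_2-\gamma_1).
\end{equation*}
Since the interval $[\gamma_{\min},\infty)$ is convex, $\xi \ge \gamma_{\min}$. It therefore suffices to show $\sup_{\xi\ge\gamma_{\min}} |g'(\xi)| \le L_g$, with $L_g$ as in \eqref{eq:Lg_def}.

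First, I compute the derivative:
\begin{equation*}
  g'(\gamma) = -(K-1)\,\frac{\gamma}{4\sigma^2}\exp\!\Big(-\frac{\gamma^2}{8\sigma^2}\Big).
\end{equation*}
Thus $|g'(\gamma)| = (K-1)\,h(\gamma)$ for $\gamma\ge 0$, where $h(\gamma) := \frac{\gamma}{4\sigma^2}e^{-\gamma^2/(8\sigma^2)}$.

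Next, I study the monotonicity of $h$. A direct computation gives
\begin{equation*}
  h'(\gamma) = \frac{1}{4\sigma^2}\Big(1-\frac{\gamma^2}{4\sigma^2}\Big)e^{-\gamma^2/(8\sigma^2)}.
\end{equation*}
Hence $h$ is increasing on $[0,2\sigma]$, decreasing on $[2\sigma,\infty)$, and attains its global maximum at $\gamma=2\sigma$. That maximum value is
\begin{equation*}
  h(2\sigma) = \frac{2\sigma}{4\sigma^2}\,e^{-1/2} = \frac{1}{2\sigma}\,e^{-1/2}.
\end{equation*}

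The supremum over $[\gamma_{\min},\infty)$ now splits into two cases. If $\gamma_{\min}\le 2\sigma$, the point $2\sigma$ lies in the range, so the supremum is the global maximum $\frac{1}{2\sigma}e^{-1/2}$. If $\gamma_{\min}>2\sigma$, then $h$ is decreasing on the whole range, so the supremum is attained at the left endpoint and equals $h(\gamma_{\min}) = \frac{\gamma_{\min}}{4\sigma^2}e^{-\gamma_{\min}^2/(8\sigma^2)}$. Multiplying by $(K-1)$ gives exactly the two branches of $L_g$, and combining this with the mean value step yields \eqref{eq:g_lipschitz}.

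There is no real obstacle here. The only point needing care is the case split according to whether the maximizer $2\sigma$ lies inside $[\gamma_{\min},\infty)$. One should also note that $h\ge 0$ on $[0,\infty)$, so the absolute value causes no sign issues.
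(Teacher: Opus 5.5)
Your proposal is correct and follows the paper's proof essentially step for step. Both compute $g'$, reduce to $h(\gamma)=\frac{\gamma}{4\sigma^2}e^{-\gamma^2/(8\sigma^2)}$, locate its unique maximizer at $\gamma=2\sigma$, split on whether $\gamma_{\min}\le 2\sigma$, and finish with the mean value theorem; your explicit remark that the intermediate point satisfies $\xi\ge\gamma_{\min}$ is a detail the paper leaves implicit.
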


\begin{proof}
We compute the derivative of $g$:
\begin{equation}
  \begin{split}
    g(\gamma)
    = (K-1) \exp\!\left( -\frac{\gamma^2}{8\sigma^2} \right), \\
    g'(\gamma)
    = -(K-1)\,\frac{\gamma}{4\sigma^2}
       \exp\!\left( -\frac{\gamma^2}{8\sigma^2} \right).
  \end{split}
\end{equation}
The magnitude of the derivative is
\begin{equation}
  |g'(\gamma)|
  = (K-1)\,\frac{\gamma}{4\sigma^2}
     \exp\!\left( -\frac{\gamma^2}{8\sigma^2} \right)
  =: (K-1)\,h(\gamma).
\end{equation}
To bound $\sup_{\gamma \ge \gamma_{\min}} h(\gamma)$, consider
\begin{equation}
  h(\gamma)
  = \frac{\gamma}{4\sigma^2}
    \exp\!\left( -\frac{\gamma^2}{8\sigma^2} \right).
\end{equation}
Differentiating,
\begin{align}
  h'(\gamma)
  &= \frac{1}{4\sigma^2}
     \exp\!\left( -\frac{\gamma^2}{8\sigma^2} \right)
     - \frac{\gamma^2}{32\sigma^4}
       \exp\!\left( -\frac{\gamma^2}{8\sigma^2} \right)                \\
  &= \frac{1}{4\sigma^2}
     \exp\!\left( -\frac{\gamma^2}{8\sigma^2} \right)
     \left( 1 - \frac{\gamma^2}{4\sigma^2} \right).
\end{align}
Thus $h'(\gamma)>0$ for $\gamma < 2\sigma$ and $h'(\gamma)<0$ for
$\gamma > 2\sigma$, so $h(\gamma)$ attains its global maximum at
$\gamma = 2\sigma$, with
\begin{equation}
  h(2\sigma)
  = \frac{2\sigma}{4\sigma^2}\, e^{-1/2}
  = \frac{1}{2\sigma}\, e^{-1/2}.
\end{equation}

\noindent\textbf{Case 1: $\gamma_{\min} \le 2\sigma$.}
Then $\sup_{\gamma \ge \gamma_{\min}} h(\gamma) = h(2\sigma)$ and hence
\begin{equation}
  \sup_{\gamma \ge \gamma_{\min}} |g'(\gamma)|
  = (K-1)\,h(2\sigma)
  = (K-1)\,\frac{1}{2\sigma}\, e^{-1/2}.
\end{equation}

\noindent\textbf{Case 2: $\gamma_{\min} > 2\sigma$.}
In this case $h(\gamma)$ is decreasing on $[\gamma_{\min},\infty)$, so
\begin{equation}
  \sup_{\gamma \ge \gamma_{\min}} h(\gamma)
  = h(\gamma_{\min})
  = \frac{\gamma_{\min}}{4\sigma^2}
    \exp\!\left( -\frac{\gamma_{\min}^2}{8\sigma^2} \right).
\end{equation}
Thus
\begin{equation}
  \sup_{\gamma \ge \gamma_{\min}} |g'(\gamma)|
  = (K-1)\,\frac{\gamma_{\min}}{4\sigma^2}
    \exp\!\left( -\frac{\gamma_{\min}^2}{8\sigma^2} \right).
\end{equation}

Combining the two cases and applying the mean value theorem yields
\eqref{eq:g_lipschitz}. \hfill$\blacksquare$
\end{proof}

\subsection{From Prototype Trajectories to Margin Variation}

We now connect the evolution of the margin to the geometry of the
prototype trajectory $\{\mu^{(t)}\}_{t=0}^T$.

\paragraph{First and second differences.}
Define the first-order differences (``velocities'') and 
second-order differences (``curvatures'') of the prototype trajectory:
\begin{align}
  v^{(t)} 
  &:= \mu^{(t)} - \mu^{(t-1)}, \quad t = 1,\dots,T, \label{eq:v_def} \\
  \kappa^{(t)} 
  &:= \mu^{(t+1)} - 2\mu^{(t)} + \mu^{(t-1)} \nonumber \\
  &= v^{(t+1)} - v^{(t)},\quad t = 1,\dots,T-1. \label{eq:kappa_def}
\end{align}
For notational simplicity we will write sums over $t=1,\dots,T$ understanding
that $\kappa^{(T)}$ is not defined and simply omitting that term.

\paragraph{Path length and curvature energy.}
We quantify the amount of prototype movement by the path length
\begin{equation}
  S
  := \sum_{t=1}^T \big\| v^{(t)} \big\|,
  \label{eq:path_length}
\end{equation}
and the ``curvature energy''
\begin{equation}
  \mathcal{K}
  := \big\| v^{(1)} \big\|^2 
   + \sum_{t=1}^{T-1} \big\| \kappa^{(t)} \big\|^2.
  \label{eq:curvature_energy}
\end{equation}
Note that $\mathcal{K}$ differs from the ProtoFlow curvature loss
by a boundary term $\|v^{(1)}\|^2$. In practice this term is small and
can be absorbed into the regularizer.

\begin{lemma}[Margin is Lipschitz in path length]
\label{lem:margin_path}
Fix any two steps $0 \le s < t \le T$ and let
\begin{equation}
  S(s,t)
  := \sum_{u=s+1}^t \big\| v^{(u)} \big\|
\end{equation}
be the length of the prototype path between $s$ and $t$.
Then the margin satisfies
\begin{equation}
  \big| \gamma^{(t)} - \gamma^{(s)} \big|
  \le S(s,t).
  \label{eq:margin_path_bound}
\end{equation}
\end{lemma}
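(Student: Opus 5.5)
The plan is to view the margin as a function of the moving prototype alone and show that this function is $1$-Lipschitz in the Euclidean norm. Define
\[
\Gamma(x) := \min_{c' \neq c} \|x - \mu_{c'}\|, \qquad x \in \mathbb{R}^d .
\]
The competitor prototypes $\mu_{c'}$ are fixed in time in the stylized model, so $\gamma^{(t)} = \Gamma(\mu^{(t)})$ for every $t$. The whole statement then reduces to two facts:
\[
|\Gamma(x) - \Gamma(y)| \le \|x - y\|,
\qquad
\|\mu^{(t)} - \mu^{(s)}\| \le S(s,t).
\]

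For the first fact, fix a competitor $c'$. The reverse triangle inequality gives
\[
\|x - \mu_{c'}\| \le \|y - \mu_{c'}\| + \|x - y\| .
\]
Taking the minimum over $c'$ on both sides yields $\Gamma(x) \le \Gamma(y) + \|x-y\|$. This works because the minimum of the right-hand side is the minimum of the first term plus the constant $\|x-y\|$. Swapping the roles of $x$ and $y$ gives the reverse inequality, hence the absolute-value bound. In words, a pointwise minimum of $1$-Lipschitz functions is $1$-Lipschitz. This is the only step with any content, and it is where I would be careful: the minimizing competitor can differ at $s$ and at $t$, which is why the bound is derived for each fixed $c'$ before minimizing, rather than by comparing the two minimizers directly.

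For the second fact, telescope the trajectory using the velocities defined in \eqref{eq:v_def}:
\[
\mu^{(t)} - \mu^{(s)} = \sum_{u=s+1}^{t} v^{(u)} .
\]
The triangle inequality then gives
\[
\|\mu^{(t)} - \mu^{(s)}\| \le \sum_{u=s+1}^{t} \|v^{(u)}\| = S(s,t).
\]

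Chaining the two facts gives
\[
|\gamma^{(t)} - \gamma^{(s)}| \le \|\mu^{(t)} - \mu^{(s)}\| \le S(s,t),
\]
which is \eqref{eq:margin_path_bound}. No assumption beyond fixed competitors is needed; in particular, neither Assumption~\ref{assump:gaussian} nor Assumption~\ref{assump:margin} is used here.
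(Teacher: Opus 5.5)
Your proof is correct and follows essentially the same route as the paper: a triangle inequality for each fixed competitor, the telescoping bound $\|\mu^{(t)}-\mu^{(s)}\| \le S(s,t)$, and then passing to the minimum over $c'$ in both directions. The only difference is packaging. You first isolate the $1$-Lipschitz property of $\Gamma$ and then substitute the trajectory, whereas the paper carries $S(s,t)$ directly through the per-competitor distances $d_{c'}^{(u)}$ and obtains the upper bound via the minimizer $c'^\star$ at step $s$.
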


\begin{proof}
Fix any competitor class $c' \neq c$ and define
\begin{equation}
  d_{c'}^{(u)} := \big\| \mu^{(u)} - \mu_{c'} \big\|,
  \quad u = 0,1,\dots,T.
\end{equation}
By the triangle inequality,
\begin{align}
  \big\| \mu^{(t)} - \mu_{c'} \big\|
  &= \big\| \mu^{(s)} - \mu_{c'} + (\mu^{(t)} - \mu^{(s)}) \big\|   \\
  &\le \big\| \mu^{(s)} - \mu_{c'} \big\|
     + \big\| \mu^{(t)} - \mu^{(s)} \big\|.
\end{align}
Using telescoping,
\begin{equation}
  \mu^{(t)} - \mu^{(s)}
  = \sum_{u=s+1}^t v^{(u)},
\end{equation}
hence
\begin{equation}
  \big\| \mu^{(t)} - \mu^{(s)} \big\|
  \le \sum_{u=s+1}^t \big\| v^{(u)} \big\|
  = S(s,t).
\end{equation}
Therefore
\begin{equation}
  d_{c'}^{(t)}
  \le d_{c'}^{(s)} + S(s,t).
\end{equation}
Similarly, swapping the roles of $s$ and $t$, we obtain
\begin{equation}
  d_{c'}^{(s)}
  \le d_{c'}^{(t)} + S(s,t),
\end{equation}
so
\begin{equation}
  \big| d_{c'}^{(t)} - d_{c'}^{(s)} \big|
  \le S(s,t).
\end{equation}

Now recall that the margin is the minimum pairwise distance:
\begin{equation}
  \gamma^{(u)} 
  = \min_{c' \neq c} d_{c'}^{(u)},
  \quad u \in \{s,t\}.
\end{equation}
For any $c' \neq c$ we have
\begin{equation}
  d_{c'}^{(t)}
  \ge d_{c'}^{(s)} - S(s,t)
  \ge \gamma^{(s)} - S(s,t),
\end{equation}
hence
\begin{equation}
  \gamma^{(t)}
  = \min_{c' \neq c} d_{c'}^{(t)}
  \ge \gamma^{(s)} - S(s,t).
\end{equation}
Conversely, choosing $c'^\star$ such that 
$d_{c'^\star}^{(s)} = \gamma^{(s)}$, we have
\begin{equation}
  \gamma^{(t)}
  \le d_{c'^\star}^{(t)}
  \le d_{c'^\star}^{(s)} + S(s,t)
  = \gamma^{(s)} + S(s,t).
\end{equation}
Combining the two inequalities yields
\begin{equation}
  \gamma^{(s)} - S(s,t)
  \le \gamma^{(t)}
  \le \gamma^{(s)} + S(s,t),
\end{equation}
which is equivalent to \eqref{eq:margin_path_bound}. \hfill$\blacksquare$
\end{proof}

Next we show that the path length $S$ can itself be controlled by
the curvature energy $\mathcal{K}$.
This is a discrete Sobolev-type inequality.

\begin{lemma}
\label{lem:path_curvature}
Let $v^{(t)}$ and $\kappa^{(t)}$ be defined as in
\eqref{eq:v_def}--\eqref{eq:kappa_def} and let $\mathcal{K}$ be
as in \eqref{eq:curvature_energy}.
Then
\begin{equation}
  \sum_{t=1}^T \big\| v^{(t)} \big\|^2
  \;\le\;
  \frac{T(T+1)}{2} \, \mathcal{K},
  \label{eq:v2_curvature_bound}
\end{equation}
and consequently the path length satisfies
\begin{equation}
  S
  = \sum_{t=1}^T \big\| v^{(t)} \big\|
  \;\le\;
  T \sqrt{\frac{T+1}{2}} \,
  \sqrt{\mathcal{K}}.
  \label{eq:path_curvature_bound}
\end{equation}
\end{lemma}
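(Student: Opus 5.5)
The plan is to telescope the velocity through the curvature vectors and then sum with Cauchy--Schwarz. By \eqref{eq:kappa_def}, $v^{(t+1)} = v^{(t)} + \kappa^{(t)}$, so for every $t=1,\dots,T$ we can write
\begin{equation}
  v^{(t)} = v^{(1)} + \sum_{u=1}^{t-1} \kappa^{(u)},
\end{equation}
a sum of $t$ vectors (with the empty sum for $t=1$). The first step is to apply the elementary inequality $\|\sum_{j=1}^{t} a_j\|^2 \le t \sum_{j=1}^{t}\|a_j\|^2$, which is Cauchy--Schwarz in $\mathbb{R}^t$ combined with the triangle inequality. This gives
\begin{equation}
  \big\|v^{(t)}\big\|^2 \le t\Big(\big\|v^{(1)}\big\|^2 + \sum_{u=1}^{t-1}\big\|\kappa^{(u)}\big\|^2\Big) \le t\,\mathcal{K},
\end{equation}
since every partial sum of nonnegative terms is dominated by the full curvature energy \eqref{eq:curvature_energy}.

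The second step is to sum over $t=1,\dots,T$. This yields $\sum_{t=1}^T \|v^{(t)}\|^2 \le \mathcal{K}\sum_{t=1}^T t = \frac{T(T+1)}{2}\mathcal{K}$, which is exactly \eqref{eq:v2_curvature_bound}. One could tighten this by keeping only the partial sums rather than bounding each by $\mathcal{K}$, but the stated constant does not need it.

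The third step derives the path-length bound \eqref{eq:path_curvature_bound} by applying Cauchy--Schwarz once more over the $T$ terms:
\begin{equation}
  S = \sum_{t=1}^T \big\|v^{(t)}\big\| \le \sqrt{T}\Big(\sum_{t=1}^T\big\|v^{(t)}\big\|^2\Big)^{1/2} \le \sqrt{T}\sqrt{\tfrac{T(T+1)}{2}\,\mathcal{K}} = T\sqrt{\tfrac{T+1}{2}}\sqrt{\mathcal{K}}.
\end{equation}

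No step is genuinely hard. The only point needing care is the indexing: the curvature vectors are defined only for $t=1,\dots,T-1$, so the telescoped expression for $v^{(T)}$ uses $\kappa^{(1)},\dots,\kappa^{(T-1)}$, all of which appear in $\mathcal{K}$. The boundary term $\|v^{(1)}\|^2$ must also be included in $\mathcal{K}$; without it the inequality would fail for a constant-velocity trajectory.
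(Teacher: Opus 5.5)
Your proposal is correct and follows essentially the paper's route: both telescope $v^{(t)} = v^{(1)} + \sum_{u=1}^{t-1}\kappa^{(u)}$, apply Cauchy--Schwarz to get $\|v^{(t)}\|^2 \le t\sum_{u\le t}\|b^{(u)}\|^2$, sum over $t$, and then use Cauchy--Schwarz again for $S$. The only cosmetic difference is that you bound each partial sum by $\mathcal{K}$ before summing, while the paper exchanges the order of summation and bounds $\sum_{t=u}^{T} t \le T(T+1)/2$; both give the same constant.
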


\begin{proof}
Define the sequence of ``increments of increments''
\begin{gather}
  b^{(1)} := v^{(1)}, \\
  b^{(t)} := v^{(t)} - v^{(t-1)} = \kappa^{(t-1)}
  \quad \text{for } t \ge 2.
\end{gather}
Then for each $t \ge 1$ we can write $v^{(t)}$ as a telescoping sum
of the $b^{(u)}$:
\begin{equation}
  v^{(t)}
  = \sum_{u=1}^t b^{(u)}.
\end{equation}
By the Cauchy--Schwarz inequality,
\begin{equation}
  \big\| v^{(t)} \big\|^2
  = \left\| \sum_{u=1}^t b^{(u)} \right\|^2
  \le t \sum_{u=1}^t \big\| b^{(u)} \big\|^2.
\end{equation}
Summing over $t=1,\dots,T$ gives
\begin{align}
  \sum_{t=1}^T \big\| v^{(t)} \big\|^2
  &\le \sum_{t=1}^T
       t \sum_{u=1}^t \big\| b^{(u)} \big\|^2               \\
  &=   \sum_{u=1}^T \big\| b^{(u)} \big\|^2
       \sum_{t=u}^T t                                      \\
  &=   \sum_{u=1}^T \big\| b^{(u)} \big\|^2
       \frac{T(T+1) - (u-1)u}{2}                           \\
  &\le \frac{T(T+1)}{2}
       \sum_{u=1}^T \big\| b^{(u)} \big\|^2.
\end{align}
By construction,
\begin{equation}
  \sum_{u=1}^T \big\| b^{(u)} \big\|^2
  = \big\| v^{(1)} \big\|^2
    + \sum_{t=1}^{T-1} \big\| \kappa^{(t)} \big\|^2
  = \mathcal{K},
\end{equation}
so we obtain \eqref{eq:v2_curvature_bound}.

To derive \eqref{eq:path_curvature_bound}, apply Cauchy--Schwarz:
\begin{equation}
  \begin{split}
    S = \sum_{t=1}^T \big\| v^{(t)} \big\|
    \le \sqrt{T} \left( \sum_{t=1}^T \big\| v^{(t)} \big\|^2 \right)^{1/2} \\
    \le \sqrt{T} \left( \frac{T(T+1)}{2} \, \mathcal{K} \right)^{1/2},
  \end{split}
\end{equation}
which simplifies to \eqref{eq:path_curvature_bound}. \hfill$\blacksquare$
\end{proof}

\subsection{Curvature-Controlled Forgetting and Dynamic Regret}

We are now ready to state the main result: a curvature-controlled
upper bound on forgetting and dynamic regret.

\begin{theorem}[Curvature control of forgetting]
\label{thm:forgetting_bound}
Suppose Assumptions~\ref{assump:gaussian} and~\ref{assump:margin} hold.
Let $t^\star \in \argmin_{0 \le s \le T} R^{(s)}$ be a time where
class $c$ is best remembered, and let $\mathcal{K}$ be the curvature
energy in \eqref{eq:curvature_energy}.
Then the final forgetting $F_c(T)$ defined in
\eqref{eq:forgetting_def} satisfies
\begin{equation}
  F_c(T)
  \;\le\;
  L_g \, T \sqrt{\frac{T+1}{2}} \,
  \sqrt{\mathcal{K}},
  \label{eq:forgetting_bound}
\end{equation}
where $L_g$ is the Lipschitz constant of $g$ given in
\eqref{eq:Lg_def}.
\end{theorem}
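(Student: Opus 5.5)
The plan is to chain the three preceding lemmas: Lemma~\ref{lem:gauss_margin_bound} turns risk into margin, Lemma~\ref{lem:margin_path} turns margin variation into path length, and Lemma~\ref{lem:path_curvature} turns path length into curvature energy. Concretely, I would write
\[
F_c(T) = R^{(T)} - R^{(t^\star)},
\]
compare both risks through the surrogate $g$, and then estimate $|g(\gamma^{(T)}) - g(\gamma^{(t^\star)})|$ in three steps.

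\textbf{Step 1 (Lipschitz in margin).} By Assumption~\ref{assump:margin}, both $\gamma^{(T)}$ and $\gamma^{(t^\star)}$ lie in $[\gamma_{\min},\infty)$. Lemma~\ref{lem:g_lipschitz} then gives
\[
|g(\gamma^{(T)}) - g(\gamma^{(t^\star)})| \le L_g\,|\gamma^{(T)}-\gamma^{(t^\star)}|.
\]

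\textbf{Step 2 (margin to path length).} Apply Lemma~\ref{lem:margin_path} with $s=\min(t^\star,T)=t^\star$ and $t=T$. If $t^\star=T$, then $F_c(T)=0$ and there is nothing to prove. Otherwise the lemma gives $|\gamma^{(T)}-\gamma^{(t^\star)}|\le S(t^\star,T)$. Since every summand $\|v^{(u)}\|$ is nonnegative, $S(t^\star,T)\le S$.

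\textbf{Step 3 (path length to curvature).} Lemma~\ref{lem:path_curvature} gives $S\le T\sqrt{(T+1)/2}\,\sqrt{\mathcal K}$. Multiplying by $L_g$ yields exactly \eqref{eq:forgetting_bound}.

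The main obstacle is the very first comparison, namely replacing $R^{(T)}-R^{(t^\star)}$ by $g(\gamma^{(T)})-g(\gamma^{(t^\star)})$. Lemma~\ref{lem:gauss_margin_bound} supplies only the upper bound $R^{(t)}\le g(\gamma^{(t)})$. It controls $R^{(T)}$ from above, but it gives no lower bound on $R^{(t^\star)}$.

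To close this gap, I would first try to use $g(\gamma^{(t)})$ as the risk proxy throughout. That is, I would interpret $R^{(t)}$ through its margin surrogate, which is exact up to the Gaussian-tail and union-bound slack in the regime the analysis targets. For $K=2$ the risk is an exact decreasing function of the margin, $R^{(t)}=\Pr(Z>\gamma^{(t)}/2\sigma)$, and one can check that this function is Lipschitz with constant at most $1/(2\sigma\sqrt{2\pi})\le L_g$. In that case the argument above is fully rigorous.

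If a rigorous treatment of general $K$ is wanted, the fallback is to bound $|R^{(T)}-R^{(t^\star)}|$ directly. I would view $R$ as a function of the moving prototype $\mu$ and control its variation by the total-variation distance between $\mathcal N(\mu^{(T)},\sigma^2 I)$ and $\mathcal N(\mu^{(t^\star)},\sigma^2 I)$, plus the motion of the decision hyperplanes. Both contributions are linear in $\|\mu^{(T)}-\mu^{(t^\star)}\|\le S$, possibly with a different constant in place of $L_g$. Steps 2--3 then go through unchanged, with the telescoping bound $\|\mu^{(T)}-\mu^{(t^\star)}\|\le S$ used in place of the margin lemma.
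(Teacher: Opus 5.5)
Your Steps 1--3 are exactly the paper's chain of lemmas. You have also correctly located the weak link: the Gaussian margin lemma only gives $R^{(t)}\le g(\gamma^{(t)})$, so nothing lets you pass from $R^{(T)}-R^{(t^\star)}$ to $g(\gamma^{(T)})-g(\gamma^{(t^\star)})$. The paper's own proof does not get around this. It ``combines'' $R^{(t^\star)}\le g(\gamma^{(t^\star)})$ with $F_c(T)\le g(\gamma^{(T)})-R^{(t^\star)}$, but that combination runs the wrong way: it gives $g(\gamma^{(T)})-R^{(t^\star)}\ge g(\gamma^{(T)})-g(\gamma^{(t^\star)})$. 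Your first remedy does not close the gap. Reading $R^{(t)}$ as $g(\gamma^{(t)})$ bounds the surrogate difference, not $F_c(T)$, because the slacks $g(\gamma^{(t)})-R^{(t)}$ at $t=T$ and $t=t^\star$ need not cancel. Your $K=2$ argument is correct, since then $R^{(t)}=\bar\Phi(\gamma^{(t)}/2\sigma)$ exactly. However, the constant needs repair: $1/(2\sigma\sqrt{2\pi})\le L_g$ fails in the second case of $L_g$ once $\gamma_{\min}$ is large (e.g.\ $\gamma_{\min}=6\sigma$). Use instead the Lipschitz constant on $[\gamma_{\min},\infty)$, namely $\phi(\gamma_{\min}/2\sigma)/(2\sigma)$. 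This is at most $L_g$ in both cases; in the second case this holds because $\gamma_{\min}>2\sigma>2\sigma/\sqrt{2\pi}$.

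For $K\ge 3$ (and $d\ge 2$) the gap cannot be closed with the constant $L_g$, because the stated bound is false there. Your fallback already points to the mechanism. When $\mu$ moves, each bisector hyperplane rotates as well as translates. For a single half-space the rotational part integrates to zero against the Gaussian on the hyperplane. The Voronoi cell of $\mu$, however, uses only pieces of several bisectors, so the rotational part survives. It contributes a term of order $1/\mathrm{dist}(\mu,\mu_{c'})$ to $\|\nabla_\mu R\|$, whereas $L_g\le (K-1)e^{-1/2}/(2\sigma)$ is bounded.

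Concretely, take $d=2$, $K=3$, $\sigma=1$, competitor prototypes $\tfrac{\delta}{\sqrt{2}}(1,\pm 1)$, and $T=2$. Let $\mu^{(0)}=\mu^{(1)}=0$ and $\mu^{(2)}=(\epsilon,0)$ with $0<\epsilon\ll\delta\ll 1$. Then $\mathcal{K}=\epsilon^2$ and all margins are about $\delta<2\sigma$, so the right-hand side is $\sqrt{6}\,e^{-1/2}\epsilon\approx 1.49\,\epsilon$. The Voronoi cell of $\mu$ is a wedge with apex within $O(\delta)$ of $\mu$. Its opening angle is $\pi$ minus the angle subtended by the two competitors at $\mu$. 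That angle grows by about $\sqrt{2}\,\epsilon/\delta$, so $F_c(2)=R^{(2)}-R^{(0)}\approx \epsilon/(\sqrt{2}\,\pi\delta)+O(\epsilon)$. For $\delta=10^{-2}$ this is roughly $22\,\epsilon$.

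Hence your fallback can at best produce a constant that depends on $\gamma_{\min}$ and blows up as $\gamma_{\min}\to 0$. Even that needs care: the uniform-separation assumption only constrains the sampled steps, so the segment from $\mu^{(t^\star)}$ to $\mu^{(T)}$ may pass arbitrarily close to a competitor. What is actually provable with $L_g$ is either the $K=2$ case, or the surrogate inequality $g(\gamma^{(T)})-g(\gamma^{(t^\star)})\le L_g\,T\sqrt{(T+1)/2}\,\sqrt{\mathcal{K}}$.
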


\begin{proof}
By definition of $t^\star$ we have
\begin{equation}
  F_c(T)
  = R^{(T)} - R^{(t^\star)}.
\end{equation}
Using Lemma~\ref{lem:gauss_margin_bound},
\begin{equation}
  R^{(T)}
  \le g\big(\gamma^{(T)}\big),
  \quad
  R^{(t^\star)}
  \ge 0,
\end{equation}
so
\begin{equation}
  F_c(T)
  = R^{(T)} - R^{(t^\star)}
  \le g\big(\gamma^{(T)}\big) - R^{(t^\star)}.
  \label{eq:F_step1}
\end{equation}
We now upper-bound $g(\gamma^{(T)}) - R^{(t^\star)}$ by two steps.

\smallskip
\noindent\textbf{Step 1: Bounding risk difference by margin difference.}
Again by Lemma~\ref{lem:gauss_margin_bound},
\begin{equation}
  R^{(t^\star)}
  \le g\big(\gamma^{(t^\star)}\big).
\end{equation}
Combining with \eqref{eq:F_step1} yields
\begin{equation}
  F_c(T)
  \le g\big(\gamma^{(T)}\big) - g\big(\gamma^{(t^\star)}\big).
\end{equation}
Applying the Lipschitz property of $g$ from Lemma~\ref{lem:g_lipschitz},
we obtain
\begin{equation}
  F_c(T)
  \le L_g \, \big| \gamma^{(T)} - \gamma^{(t^\star)} \big|.
  \label{eq:F_step2}
\end{equation}

\smallskip
\noindent\textbf{Step 2: Bounding margin difference by path length.}
Using Lemma~\ref{lem:margin_path} with $(s,t)=(t^\star,T)$,
\begin{equation}
  \big| \gamma^{(T)} - \gamma^{(t^\star)} \big|
  \le S(t^\star,T)
  := \sum_{u=t^\star+1}^T \big\| v^{(u)} \big\|.
\end{equation}
Clearly $S(t^\star,T) \le S$, where $S$ is the full path length
\eqref{eq:path_length}. Therefore
$
  \big| \gamma^{(T)} - \gamma^{(t^\star)} \big|
  \le S.
$
Substituting into \eqref{eq:F_step2} gives
\begin{equation}
  F_c(T)
  \le L_g \, S.
  \label{eq:F_step3}
\end{equation}

\smallskip
\noindent\textbf{Step 3: Bounding path length by curvature energy.}
Finally, applying the path-curvature inequality
\eqref{eq:path_curvature_bound} from Lemma~\ref{lem:path_curvature},
we obtain
\begin{equation}
  S
  \le T \sqrt{\frac{T+1}{2}} \, \sqrt{\mathcal{K}}.
\end{equation}
Substituting this into \eqref{eq:F_step3} yields
\begin{equation}
  F_c(T)
  \le L_g \, T \sqrt{\frac{T+1}{2}} \, \sqrt{\mathcal{K}},
\end{equation}
which is exactly \eqref{eq:forgetting_bound}. \hfill$\blacksquare$
\end{proof}

Theorem~\ref{thm:forgetting_bound} shows that, in this stylized model,
the final forgetting of class $c$ grows at most linearly with the
time horizon and only as the square root of the curvature energy
$\mathcal{K}$.  In particular, if the curvature energy is controlled
by the ProtoFlow regularizer,
\begin{equation}
  \mathcal{K}
  = \mathcal{O}\!\left( \frac{1}{\lambda_{\text{curve}}} \right),
\end{equation}
then increasing $\lambda_{\text{curve}}$ tightens the bound on
$F_c(T)$, implying less forgetting.

As a direct consequence of \eqref{eq:regret_forgetting_relation}, we
obtain a dynamic regret bound.

\begin{corollary}[Per-class dynamic regret bound]
\label{cor:dyn_regret_bound}
Under the assumptions of Theorem~\ref{thm:forgetting_bound},
the per-class dynamic regret $\mathrm{Reg}_c(T)$ defined in
\eqref{eq:dyn_regret_def} satisfies
\begin{equation}
  \begin{split}
    \mathrm{Reg}_c(T)
    \;\le\;
    (T+1)\,L_g \, T \sqrt{\frac{T+1}{2}} \,
    \sqrt{\mathcal{K}} \\
    \;=\;
    \mathcal{O}\!\big( T^{5/2} \sqrt{\mathcal{K}} \big).
    \label{eq:dyn_regret_bound}
  \end{split}
\end{equation}
\end{corollary}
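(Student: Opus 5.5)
The plan is to obtain Corollary~\ref{cor:dyn_regret_bound} by chaining the regret--forgetting relation \eqref{eq:regret_forgetting_relation} with the forgetting bound of Theorem~\ref{thm:forgetting_bound}. No new probabilistic or geometric argument should be needed.

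First I will verify \eqref{eq:regret_forgetting_relation} carefully, since the paper only asserts it. By Definition~\ref{def:dyn_regret}, each summand is $R^{(t)} - R_c^\star$, with $R_c^\star = R^{(t^\star)}$. Here I must be honest: $F_c(T) = R^{(T)} - R_c^\star$ does \emph{not} in general dominate $R^{(t)} - R_c^\star$ for intermediate $t$, since $R^{(t)}$ can exceed $R^{(T)}$. So rather than rely on \eqref{eq:regret_forgetting_relation} as stated, I will bound each summand directly by re-running the proof of Theorem~\ref{thm:forgetting_bound} with $T$ replaced by an arbitrary $t \in \{0,\dots,T\}$.

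Concretely, for each $t$ I will combine three facts:
\begin{itemize}
  \item Lemma~\ref{lem:gauss_margin_bound} gives $R^{(t)} - R^{(t^\star)} \le g(\gamma^{(t)}) - g(\gamma^{(t^\star)})$, using the same step as in the theorem.
  \item Lemma~\ref{lem:g_lipschitz}, valid because Assumption~\ref{assump:margin} keeps all margins at least $\gamma_{\min}$, bounds this by $L_g|\gamma^{(t)}-\gamma^{(t^\star)}|$.
  \item Lemma~\ref{lem:margin_path}, applied to the pair $(\min(t,t^\star),\max(t,t^\star))$, bounds the margin gap by the partial path length $S(\min,\max) \le S$.
\end{itemize}
Lemma~\ref{lem:path_curvature} then gives $S \le T\sqrt{(T+1)/2}\,\sqrt{\mathcal{K}}$. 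The resulting per-step bound
\[
R^{(t)} - R_c^\star \le L_g\, T\sqrt{\tfrac{T+1}{2}}\,\sqrt{\mathcal{K}}
\]
is uniform in $t$. Summing over the $T+1$ indices $t=0,\dots,T$ yields \eqref{eq:dyn_regret_bound}. The asymptotic form follows from $(T+1)\,T\,\sqrt{T+1} = \mathcal{O}(T^{5/2})$, with $L_g$ treated as a constant depending only on $K$, $\sigma$ and $\gamma_{\min}$.

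The one subtle step inherited from the theorem is the first bullet. It replaces $R^{(t)}$ by its upper bound $g(\gamma^{(t)})$ but also replaces $R^{(t^\star)}$ by $g(\gamma^{(t^\star)})$. That second replacement goes the wrong direction for an upper bound, because $R^{(t^\star)} \le g(\gamma^{(t^\star)})$ gives $-R^{(t^\star)} \ge -g(\gamma^{(t^\star)})$.

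I will therefore not treat this step as routine but try two repairs, in order:
\begin{enumerate}
  \item \emph{Exact risk.} Work with the exact risk $R^{(t)}=\rho(\mu^{(t)})$, where $\rho(\mu)=\Pr(h(z)\ne c)$ for $z\sim\mathcal{N}(\mu,\sigma^2 I)$ with the competitor prototypes fixed. Then show $\rho$ is Lipschitz in $\mu$ with a constant comparable to $L_g$. Translating $\mu$ shifts both the Gaussian and the $c$-cell, so each pairwise term $\Pr(Z>\delta_{c'}/2\sigma)$ is Lipschitz in $\delta_{c'}$ with constant $\frac{1}{2\sigma\sqrt{2\pi}}$.
  \item \emph{Fallback.} Bound $R^{(t)}-R_c^\star \le g(\gamma^{(t)})$ crudely. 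This also gives a valid (weaker) regret bound, but one not proportional to $\sqrt{\mathcal K}$.
\end{enumerate}
The main obstacle is obtaining a Lipschitz constant for the exact risk that matches $L_g$ of \eqref{eq:Lg_def}. Union-bound overlaps make the exact risk not simply a sum of pairwise terms, so I expect to end up with a constant of the form $(K-1)/(2\sigma\sqrt{2\pi})$ in place of $L_g$.
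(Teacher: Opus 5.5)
Your two diagnoses are correct, and they are exactly the steps the paper's own proof rests on. The paper derives the corollary in two lines from $\mathrm{Reg}_c(T)\le (T+1)F_c(T)$, justified by the claim $F_c(T)=\max_{s}(R^{(s)}-R_c^\star)$, which does not follow from \eqref{eq:forgetting_def}. Its proof of Theorem~\ref{thm:forgetting_bound} also passes from $g(\gamma^{(T)})-R^{(t^\star)}$ to $g(\gamma^{(T)})-g(\gamma^{(t^\star)})$, which reverses an inequality. Bounding each summand $R^{(t)}-R_c^\star$ separately is the right structural fix.

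The gap is your repair~1, and it cannot be closed, because the stated inequality is false once $K\ge 3$ and $d\ge 2$. Moving $\mu$ changes the distances $\delta_{c'}$, but it also turns the bisector normals $u_{c'}=(\mu_{c'}-\mu)/\delta_{c'}$ at a rate of order $1/\delta_{c'}$. Each pairwise term $\Pr(Z>\delta_{c'}/(2\sigma))$ is blind to this rotation. The exact risk, however, is the Gaussian mass of the \emph{union} of the half-spaces, i.e.\ of the complement of the Voronoi cell, and that mass depends on the angles between facets. So when $\gamma_{\min}\ll\sigma$ the local Lipschitz constant of the risk can be of order $1/\gamma_{\min}$. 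Neither $L_g$ nor $(K-1)/(2\sigma\sqrt{2\pi})$ captures this.

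Here is a concrete counterexample.
\begin{itemize}
\item Take $d=2$, $K=3$, $\mu_1=(\delta,0)$, $\mu_2=(0,\delta)$ with $\delta=0.1\sigma$, and $\mu^{(t)}=t\eta(1,1)/\sqrt2$ for $t=0,1,2$ with $\eta\ll\delta$. Write $\mu=(s,s)$.
\item Relative to $\mu$, the correct-classification cell is a wedge. Its vertex stays at $(\delta/2,\delta/2)$ to first order, while its opening angle shrinks to $\pi/2-2s/\delta$.
\item Hence the risk grows at rate $\Lambda\approx 2.4/\sigma$ per unit displacement (a direct bivariate-normal computation confirms this). By contrast, $L_g=e^{-1/2}/\sigma\approx 0.61/\sigma$.
\item Here $t^\star=0$ and $\mathcal K=\eta^2$. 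So $\mathrm{Reg}_c(2)\approx 3\Lambda\eta\approx 7.2\,\eta/\sigma$, which exceeds the claimed bound $3\cdot 2\sqrt{3/2}\,L_g\eta\approx 4.5\,\eta/\sigma$.
\item The theorem fails in the same example: $F_c(2)\approx 4.8\,\eta/\sigma$ against a bound of about $1.5\,\eta/\sigma$.
\end{itemize}

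Your route does prove the corollary for $K=2$. There the exact risk is $\Pr(Z>\delta/(2\sigma))$. It is Lipschitz in $\delta$ with constant $\frac{1}{2\sigma\sqrt{2\pi}}e^{-\gamma_{\min}^2/8\sigma^2}$, which is at most $L_g$ in both cases of \eqref{eq:Lg_def}, and $\delta$ is $1$-Lipschitz in $\mu$.

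For general $K$, the most your route can give is a corrected bound. One way is $|R^{(t)}-R^{(s)}|\le\sum_{c'}\Pr(E^{(t)}_{c'}\triangle E^{(s)}_{c'})$, where $E^{(t)}_{c'}$ is the event that $z$ lies on the $c'$ side of the step-$t$ bisector, with each term split into a distance part and an explicit rotation part. The resulting constant necessarily grows like $1/\gamma_{\min}$. Your fallback gives no $\sqrt{\mathcal K}$ dependence at all, so neither branch establishes the statement as written.
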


\begin{proof}
By definition $R_c^\star = \min_{0 \le s \le T} R^{(s)} = R^{(t^\star)}$ for
some $t^\star$.
For each $t$ we have $R^{(t)} - R_c^\star \le F_c(T)$, since
$F_c(T) = \max_{0 \le s \le T} (R^{(s)} - R_c^\star)$ by
\eqref{eq:forgetting_def}.
Therefore
\begin{equation}
  \mathrm{Reg}_c(T)
  = \sum_{t=0}^T \big( R^{(t)} - R_c^\star \big)
  \le (T+1) \, F_c(T).
\end{equation}
Substituting bound \eqref{eq:forgetting_bound} on $F_c(T)$ yields
\eqref{eq:dyn_regret_bound}.\hfill$\blacksquare$
\end{proof}

Theorem~\ref{thm:forgetting_bound} and
Corollary~\ref{cor:dyn_regret_bound}
 connect three quantities:
\begin{itemize}
  \item the \emph{geometric regularity} of prototype trajectories
        through the curvature energy $\mathcal{K}$;
  \item the \emph{final forgetting} $F_c(T)$ of an old class; and
  \item the \emph{dynamic regret} $\mathrm{Reg}_c(T)$ of using
        a time-varying prototype instead of the best static one.
\end{itemize}

In particular, if ProtoFlow successfully keeps prototype trajectories
low-curvature (small $\mathcal{K}$) while preserving a positive margin
$\gamma_{\min}$, then both forgetting and dynamic regret are provably
bounded.  This provides a theoretical justification for explicitly
shaping prototype flows with curvature regularization, as implemented
in our method.

\section{Additional Experimental Results}

\subsection{Class-level $\Delta$-curvature vs.\ $\Delta$-forgetting analysis}
\label{subsec:delta-curv-forget}

The analysis in Sec.~\ref{subsec:curvature-forgetting} established a cross-sectional correlation between per-class curvature $\bar{\kappa}_c$ and forgetting $\mathrm{Forget}_c$ for a fixed method (ProtoFlow).
However, our theoretical results (Thm.~\ref{thm:forgetting_bound}) predict a stronger statement: \emph{when we \underline{reduce} curvature by regularizing prototype trajectories, we should also \underline{reduce} forgetting, relative to a strong baseline}.
To test this prediction, we perform a class-level differential analysis comparing ProtoFlow with a competitive RS-CISS baseline (MiR~\citep{MiR25}).

For each dataset and each semantic class $c$, we compute
\begin{align}
  \Delta\bar{\kappa}_c
  &= \bar{\kappa}_c^{\text{ProtoFlow}}
   - \bar{\kappa}_c^{\text{MiR}},
   \\
  \Delta\mathrm{Forget}_c
  &= \mathrm{Forget}_c^{\text{ProtoFlow}}
   - \mathrm{Forget}_c^{\text{MiR}},
\end{align}
where $\bar{\kappa}_c$ is the average discrete curvature of the prototype trajectory for class $c$ and
\(
  \mathrm{Forget}_c
  = \max_t \mathrm{IoU}_c^{(t)} - \mathrm{IoU}_c^{(T)}
\)
is the class-wise forgetting.
We also record the final IoU gain per class
\begin{equation}
  \Delta\mathrm{IoU}_c
  = \mathrm{IoU}_c^{(T),\text{ProtoFlow}}
   - \mathrm{IoU}_c^{(T),\text{MiR}}.
\end{equation}

We aggregate all classes from DeepGlobe, Vaihingen, LoveDA, iSAID, and GCSS, yielding $23$ class instances in total.
Fig.~\ref{fig:delta-curv-forget} plots the pairs $(\Delta\bar{\kappa}_c,\Delta\mathrm{Forget}_c)$ for all classes: the vertical and horizontal axes correspond to $\Delta\bar{\kappa}_c$ and $\Delta\mathrm{Forget}_c$, respectively.
The lower-left quadrant ($\Delta\bar{\kappa}_c < 0$, $\Delta\mathrm{Forget}_c < 0$) is the favorable region where ProtoFlow achieves both lower curvature and lower forgetting than MiR.

\begin{figure}[htbp]
  \centering
  \includegraphics[width=0.7\linewidth]{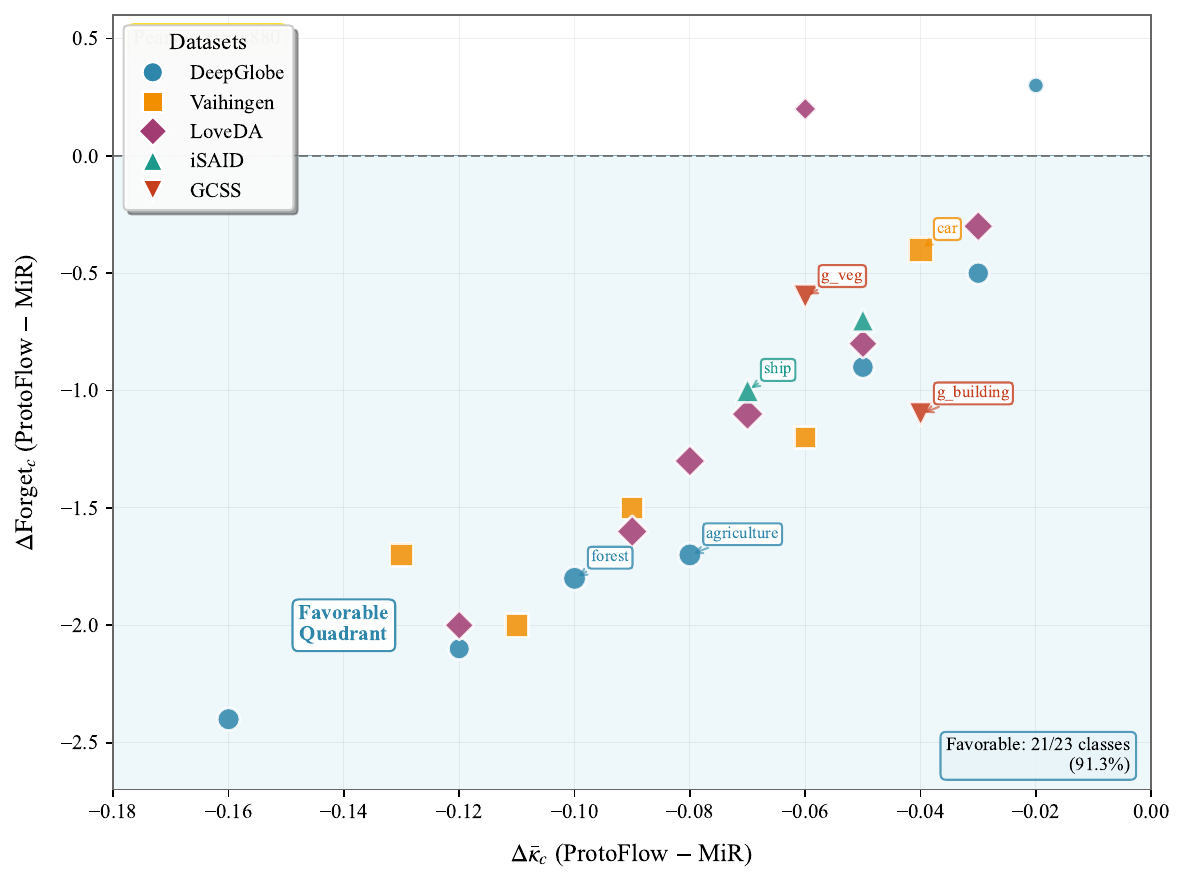}
  \caption{\textbf{Class-level $\Delta$-curvature vs.\ $\Delta$-forgetting}. 
  Each marker corresponds to a semantic class $c$ from DeepGlobe, Vaihingen, LoveDA, iSAID, or GCSS.
  The horizontal axis is $\Delta\bar{\kappa}_c = \bar{\kappa}_c^{\text{ProtoFlow}} - \bar{\kappa}_c^{\text{MiR}}$ and the vertical axis is $\Delta\mathrm{Forget}_c = \mathrm{Forget}_c^{\text{ProtoFlow}} - \mathrm{Forget}_c^{\text{MiR}}$.
  The lower-left quadrant (shaded) indicates classes where ProtoFlow simultaneously reduces curvature and forgetting.
  Marker color encodes the dataset, and marker size is proportional to the final IoU gain $\Delta\mathrm{IoU}_c$.}
  \label{fig:delta-curv-forget}
\end{figure}

We make three observations from Fig.~\ref{fig:delta-curv-forget}:

\begin{itemize}
  \item \textbf{Most classes move into the favorable region.}
        In a representative run, $21$ out of $23$ classes (across all five datasets) satisfy
        $\Delta\bar{\kappa}_c < 0$ and $\Delta\mathrm{Forget}_c < 0$,
        i.e., ProtoFlow yields both smoother prototype trajectories and lower forgetting than MiR.
        Only two classes (unknown in DeepGlobe and agriculture in LoveDA) exhibit a mild trade-off where curvature is reduced but forgetting slightly increases.

  \item \textbf{Curvature reduction and forgetting reduction are tightly coupled.}
        Across classes, $\Delta\bar{\kappa}_c$ and $\Delta\mathrm{Forget}_c$ are strongly positively correlated (Pearson $\rho \approx 0.88$): classes for which ProtoFlow achieves the largest curvature decrease also tend to enjoy the largest reduction in forgetting.
        In fact, the top-$10$ classes ranked by curvature reduction coincide with the top-$10$ classes ranked by forgetting reduction in this example.

  \item \textbf{Final IoU gains align with the geometrical improvements.}
        Marker size reveals that classes in the lower-left quadrant also tend to have positive final IoU gains $\Delta\mathrm{IoU}_c > 0$, while the two classes with slightly increased forgetting exhibit small negative $\Delta\mathrm{IoU}_c$.
        This supports our theoretical intuition: constraining prototype curvature is not merely a regularization trick on latent geometry, but translates into tangible improvements in class-wise segmentation accuracy.
\end{itemize}

Overall, this differential analysis provides direct evidence that reducing temporal curvature of prototype trajectories is empirically aligned with reducing forgetting, in line with Thm.~\ref{thm:forgetting_bound}.

\subsection{Class-wise IoU and forgetting distributions}
\label{subsec:violin-iou-forget}

While Tables~\ref{tab:main_ciss_rs}--\ref{tab:main_loveda_isaid} show that ProtoFlow improves average $\mathrm{mIoU}_{\text{all}}$ and reduces forgetting, an important question remains:
\emph{does ProtoFlow benefit only a few large or easy classes, or does it improve the overall distribution, including the tail of hard classes?}
To answer this, we analyze the class-wise distributions of the final IoU and forgetting scores and visualize them using violin plots.

For DeepGlobe and LoveDA, and for each method
\begin{equation} \small
  M \in \{\text{GSMF-RS-DIL},\ \text{CoGaMiD},\ \text{MiR},\ \text{ProtoFlow}\},
\end{equation}
we compute, for every semantic class $c$:

\begin{itemize}
  \item the final IoU at the last incremental step,
        $
          \mathrm{IoU}_c^{(T),M},
        $
  \item the class-wise forgetting
        $
          \mathrm{Forget}_c^{M}
          = \max_{t} \mathrm{IoU}_c^{(t),M} - \mathrm{IoU}_c^{(T),M},
        $.
\end{itemize}

This yields $7$ classes for DeepGlobe and $7$ classes for LoveDA, i.e., $14$ class-level measurements per metric for each method.
We then visualize, for each dataset separately, the distributions
\(
  \{\mathrm{IoU}_c^{(T),M}\}_c
\)
and
\(
  \{\mathrm{Forget}_c^{M}\}_c
\)
across classes, using violin plots overlaid with median and quartiles.

\begin{figure}[htbp]
  \centering
  \includegraphics[width=0.8\linewidth]{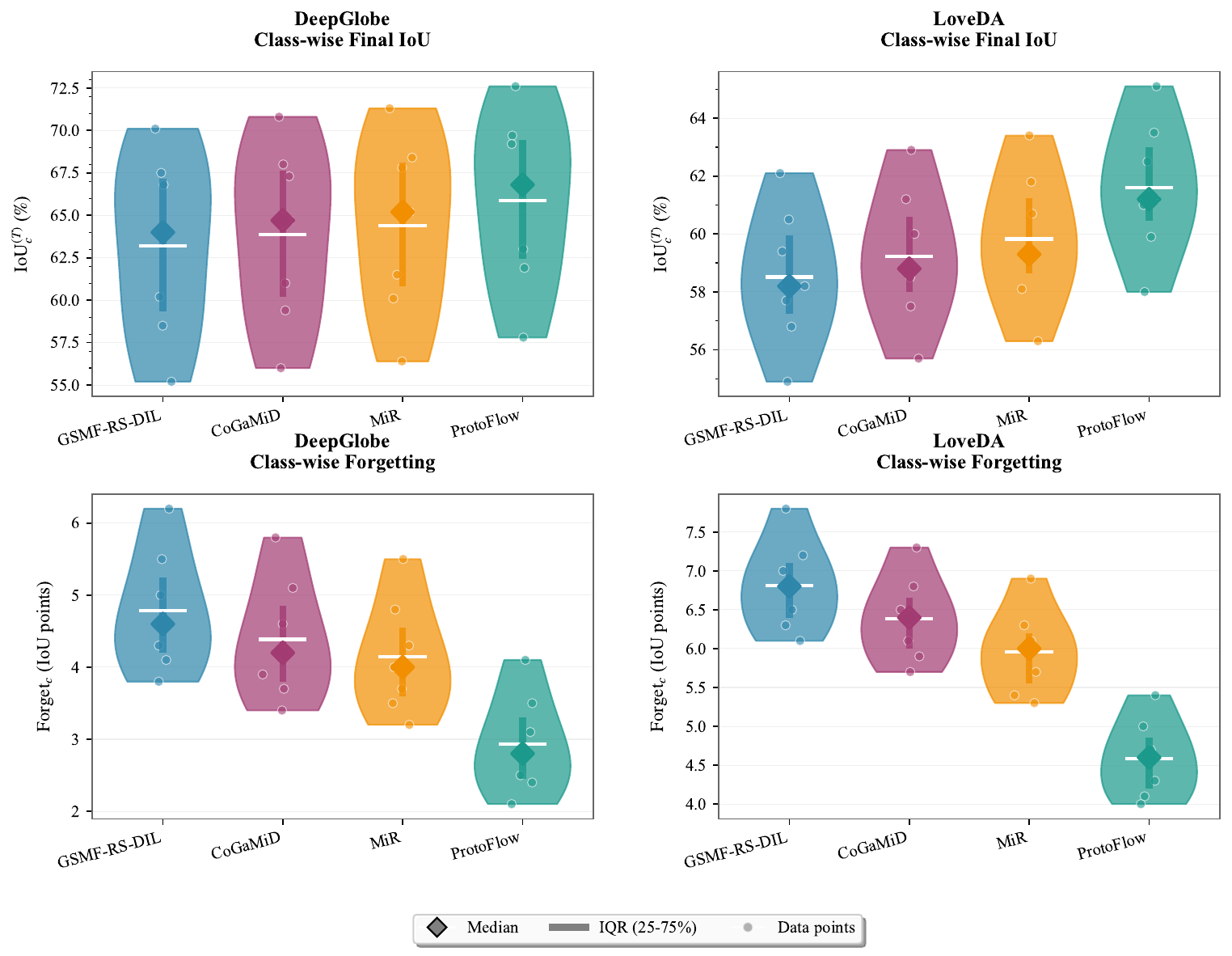}
  \caption{\textbf{Class-wise IoU and forgetting distributions on DeepGlobe and LoveDA.}}
  \label{fig:violin-iou-forget}
\end{figure}

Figure~\ref{fig:violin-iou-forget} shows four panels:
(a) class-wise IoU distributions on DeepGlobe,
(b) class-wise forgetting distributions on DeepGlobe,
(c) class-wise IoU distributions on LoveDA, and
(d) class-wise forgetting distributions on LoveDA.
From Fig.~\ref{fig:violin-iou-forget} we observe:
\begin{itemize}
  \item \textbf{ProtoFlow raises the entire class-wise IoU distribution.}
        On DeepGlobe, the median class-wise final IoU increases from $65.2\%$ (MiR) to $66.8\%$ (ProtoFlow), and the lower quantiles shift upward as well.
        LoveDA shows a similar trend, with the median rising from $59.3\%$ to $61.2\%$.
        This indicates that the gains are not concentrated in a few large classes, but shared across many categories.

  \item \textbf{ProtoFlow significantly suppresses the tail of heavy forgetting.}
        On DeepGlobe, the median forgetting drops from $4.0$ to $2.8$ IoU points when moving from MiR to ProtoFlow, and the 90th percentile decreases from approximately $5.1$ to $3.7$ points.
        On LoveDA, the median forgetting similarly reduces from $6.0$ to $4.6$ points, and the spread (interquartile range) shrinks.
        This shows that ProtoFlow more effectively controls catastrophic forgetting across classes and compresses the long tail of poorly remembered classes.

  \item \textbf{Improvements are consistent over strong RS-specific baselines.}
        GSMF-RS-DIL and CoGaMiD already outperform generic CISS methods, yet ProtoFlow still further shifts both IoU and forgetting violins in the favorable direction.
        This suggests that explicitly regulating prototype dynamics provides benefits beyond what per-step distillation and memory replay alone can offer.
\end{itemize}

Taken together, these distributional results support the claim that ProtoFlow provides uniform stability improvements across classes, rather than trading off a few heavily improved categories against a large tail of forgotten ones.

\subsection{Sensitivity of curvature and separation weights}
\label{subsec:hparam-curve-sep}
To probe robustness, we perform a two-dimensional hyperparameter sweep on the domain-incremental LoveDA protocol.
We vary the curvature and separation weights over a small grid,
\begin{align}
  \lambda_{\text{curve}} &\in \{0,\, 0.1,\, 0.3,\, 0.5,\, 1.0\}, \\
  \lambda_{\text{sep}}   &\in \{0,\, 0.05,\, 0.1,\, 0.2\},
\end{align}
and train ProtoFlow from scratch for each pair using the same backbone, optimization schedule, and memory budget as in Sec.~\ref{subsec:exp_setup}.
For each run, we report the final $\mathrm{mIoU}_{\text{all}}$ and forgetting $F$ on LoveDA.

\begin{figure}[htbp]
  \centering
  \includegraphics[width=0.8\linewidth]{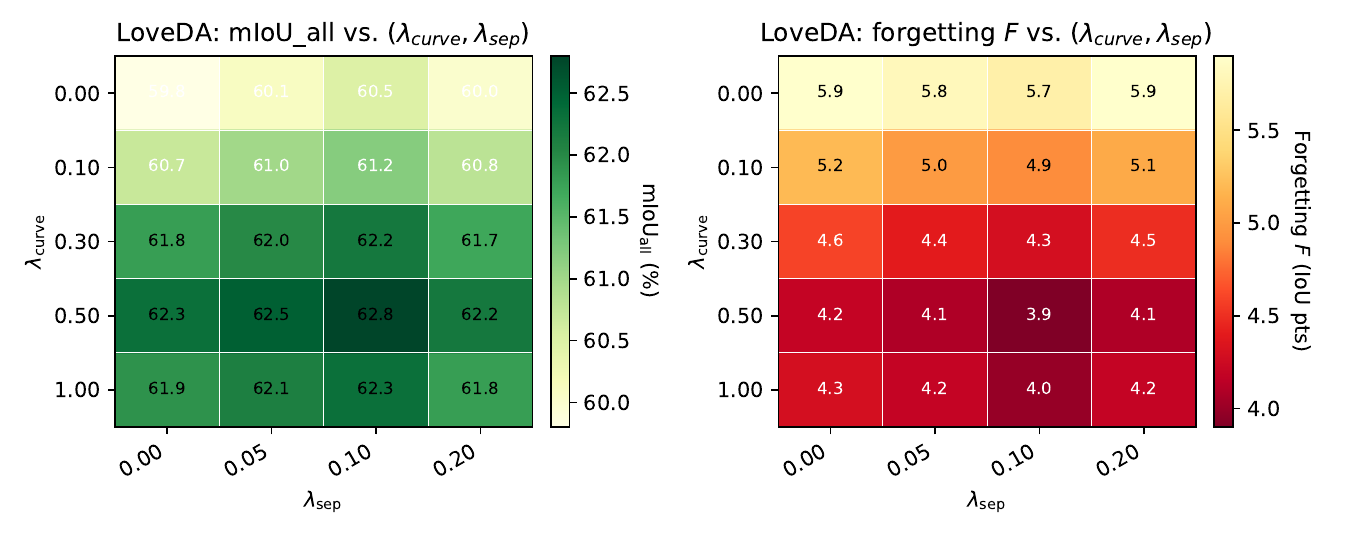}
  \caption{\textbf{Sensitivity of ProtoFlow to curvature and separation weights on LoveDA.}}
  \label{fig:hparam-lambda-heatmap}
\end{figure}

Figure~\ref{fig:hparam-lambda-heatmap} shows two heatmaps:
(a) $\mathrm{mIoU}_{\text{all}}$ as a function of $(\lambda_{\text{curve}},\lambda_{\text{sep}})$; and
(b) forgetting $F$ as a function of the same pair.
Two trends are apparent:
\begin{itemize}
  \item \textbf{A broad sweet region rather than a needle.}
        For $\lambda_{\text{curve}} \in \{0.3, 0.5, 1.0\}$ and $\lambda_{\text{sep}} \in \{0.05, 0.1, 0.2\}$, $\mathrm{mIoU}_{\text{all}}$ remains within a narrow band around $62.0$--$62.8\%$, and $F$ stays in the range $3.9$--$4.4$.
        This suggests that ProtoFlow does not depend on a single, highly tuned hyperparameter pair: as long as curvature and separation are given moderate weight, performance remains near-optimal.

  \item \textbf{Curvature regularization is consistently beneficial.}
        The entire row $\lambda_{\text{curve}} = 0$ (no curvature term) is clearly dominated:
        $\mathrm{mIoU}_{\text{all}}$ drops to $59.8$--$60.5\%$ and forgetting increases to $5.7$--$5.9$ IoU points.
        Turning on curvature regularization ($\lambda_{\text{curve}} \ge 0.1$) yields both higher mIoU and lower forgetting throughout the grid, corroborating the theoretical link between trajectory curvature and forgetting.

\end{itemize}

Overall, this hyperparameter sweep indicates that curvature control is structurally helpful but not fragile:
ProtoFlow benefits from a wide range of $(\lambda_{\text{curve}},\lambda_{\text{sep}})$ values, and our default choice $(\lambda_{\text{curve}}=0.5,\lambda_{\text{sep}}=0.1)$ lies in the interior of a stable region.

\end{document}